\newtheorem{theorem}{Theorem}[section]
\newtheorem{conjecture}[theorem]{Conjecture}
\newtheorem{corollary}[theorem]{Corollary}
\newtheorem{proposition}[theorem]{Proposition}
\newtheorem{lemma}[theorem]{Lemma}
\newcommand{\bbE}{\mathbb{E}}
\newcommand{\bbP}{\mathbb{P}}
\newcommand{\pa}[1]{\left(#1\right)}
\newcommand{\ac}[1]{\left\{#1\right\}}
\newcommand{\cro}[1]{\left[#1\right]}
\title{Phase Transition for Stochastic Block Model with more than $\sqrt{n}$ Communities (II)}
\author{Alexandra Carpentier\footnote{Institut für Mathematik -- Universität Potsdam, Potsdam, Germany. Alexandra.Carpentier@uni-potsdam.de},   \  Christophe Giraud\footnote{Laboratoire de Math\'ematiques d'Orsay, Universit\'e Paris-Saclay, CNRS, France. Christophe.Giraud@universite-paris-saclay.fr} \ 
and Nicolas Verzelen\footnote{INRAE, Institut Agro, MISTEA, Univ. Montpellier, France. Nicolas.Verzelen@inrae.fr} }
\date{}
\begin{document}

\maketitle

\begin{abstract}
A fundamental theoretical question in network analysis is to determine under which conditions community recovery is possible in polynomial time in the Stochastic Block Model (SBM). 
When the number $K$ of communities remains smaller than $\sqrt{n}$ ---where $n$ denotes the number of nodes in the observed graph---, non-trivial community recovery is possible in polynomial time above, and only above, the Kesten--Stigum (KS) threshold, originally postulated using arguments from statistical physics.

When $K \geq \sqrt{n}$, Chin, Mossel, Sohn, and Wein~\cite{pmlr-v291-chin25a} recently proved that, in the \emph{sparse regime}, community recovery in polynomial time is achievable below the KS threshold by counting non-backtracking paths. 
This finding led them to postulate a new threshold for the many-communities regime $K \geq \sqrt{n}$. 
Subsequently, Carpentier, Giraud, and Verzelen~\cite{carpentier2025phase} established the failure of low-degree polynomials below this new threshold across \emph{all density regimes}, and demonstrated successful recovery above the threshold in \emph{certain moderately sparse} settings. 
While these results provide strong evidence that, in the many community setting, the computational barrier for community recovery  lies at the threshold proposed in~\cite{pmlr-v291-chin25a}, the question of achieving recovery above this threshold still remains open in most density regimes.

The present work is a follow-up to~\cite{carpentier2025phase}, in which we prove Conjecture~1.4 stated therein by: \\
1- Constructing a family of motifs satisfying specific structural properties; and\\
2- Proving that community recovery is possible above the proposed threshold by counting such motifs.\\ 
Together with~\cite{pmlr-v291-chin25a} and~\cite{carpentier2025phase}, 
our results provide a complete picture of the computational barrier for community recovery in the SBM with $K \geq \sqrt{n}$ communities. 
They also indicate that, in moderately sparse regimes, the optimal algorithms appear to be fundamentally different from spectral methods.
\end{abstract}

\section{Introduction}

Network analysis aims to understand the structure of random interactions among individuals or objects. A network over $n$ individuals is typically represented as an undirected graph with $n$ nodes, where edges encode observed binary interactions between pairs of individuals. Examples include social networks (friendships or follower relations), biological networks (gene--gene or protein--protein interactions), and information networks (such as email exchanges or citation graphs).
A fundamental statistical model for such data is the \emph{Stochastic Block Model} (SBM), introduced by Holland, Laskey, and Leinhardt~\cite{holland1983stochastic}. In the SBM, the $n$ nodes are partitioned into $K<n$ latent groups or \emph{communities}. Conditional on these (unobserved) community labels, edges are generated independently, with probabilities depending only on the communities of the connected nodes. In its simplest form, the within-community connection probability is $p$, while the between-community connection probability is $q$, with $p>q$. This model captures community structure in a mathematically tractable way and has been widely studied in statistics, computer science, and probability theory~\cite{AbbeReview2018}.

The central inferential task in SBM analysis is to recover the latent community memberships from an observed network. In the sparse regime, where $q$ scales as $1/n$, the seminal paper of Decelle, Krzakala, Moore, and Zdeborov\'a~\cite{Decelle2011} conjectured ---based on the replica heuristic from statistical physics--- 
that, when $K$ is fixed and $n\to+\infty$, community recovery in polynomial-time is possible only above the Kesten--Stigum (KS) threshold
\begin{equation}\label{eq:KS_intro}
\frac{n\lambda^2}{K\lambda + K^2q} > 1.
\end{equation}
Considerable effort has been devoted to confirming this conjecture. Polynomial-time algorithms provably achieving non-trivial recovery above the KS threshold have been developed in~\cite{massoulie2014community,mossel2018proof,bordenave2015non,AbbeSandon2015a,pmlr-v291-chin25a}.
Some hardness results below the KS threshold have been established within the Low-Degree Polynomial  framework~\cite{luo2023computational,SohnWein25,ding2025low}, providing strong evidence for computational intractability in this regime.
While most prior research has focused on small numbers of communities, recent attention has turned to the regime where the number of communities $K$ grows with $n$. Early work by Chen and Xu~\cite{chen2014statistical} proposed polynomial-time algorithms achieving partial recovery near the KS threshold. Subsequent studies~\cite{stephan2024community,pmlr-v291-chin25a} have shown that when $K=o(\sqrt{n})$, recovery above the KS threshold is possible in polynomial time, 
while low-degree polynomials fail below the KS threshold, thus confirming the predictions of~\cite{Decelle2011} in this regime.

When $K \gg \sqrt{n}$, the computational barrier for recovery  remained poorly understood until recently. 
Chin, Mossel, Sohn, and Wein~\cite{pmlr-v291-chin25a} made a breakthrough by showing that, in the sparse setting (where $q$ scales as $1/n$)  with $K \gg \sqrt{n}$ communities, partial recovery is achievable \emph{below} the KS threshold using a polynomial-time algorithm based on non-backtracking statistics. 
Their algorithm succeeds in the sparse regime as soon as
\begin{equation}\label{eq:new_intro}
\lambda \gtrsim_{\log} \left(q + \frac{\lambda}{K}\right)^{1 - \log_n(K)},
\end{equation}
where $\gtrsim_{\log}$ hides polylogarithmic factors, and $\log_n$ denotes the logarithm in base $n$, so that $K = n^{\log_n(K)}$. 
The failure of low-degree polynomials below the threshold~\eqref{eq:new_intro} was later established by Carpentier, Giraud, and Verzelen~\cite{carpentier2025phase}, not only in the sparse setting ($q$ scaling as $1/n$) considered in~\cite{pmlr-v291-chin25a}, but across all density regimes satisfying $q = o(1)$. 
Furthermore,~\cite{carpentier2025phase} shows that community recovery is possible above the threshold~\eqref{eq:new_intro} using $m$-clique counting when $q \asymp n^{-2/(m+1)}$, for $m \in \{3,4,5,\ldots\}$, and by counting self-avoiding paths of length $m-1$ when $q \asymp n^{-(m-2)/(m-1)}$, for $m \in \{3,4,5,\ldots\}$. 
For the remaining density regimes, Conjecture~1.4 of~\cite{carpentier2025phase} predicts that community recovery remains possible above the threshold~\eqref{eq:new_intro} through the enumeration of specific motifs satisfying certain structural properties.

The present paper is a follow-up to~\cite{carpentier2025phase}, dedicated to proving this conjecture. 
Our main contributions are :
\begin{enumerate}
\item To construct a new family of motifs fulfilling the structural properties required in Conjecture~1.4 of~\cite{carpentier2025phase};
\item To prove that an algorithm based on counting such motifs successfully recovers the communities above the threshold~\eqref{eq:new_intro}.
\end{enumerate}
These results complement the contributions of~\cite{pmlr-v291-chin25a} and~\cite{carpentier2025phase}, 
and together establish that the computational barrier for community recovery in the SBM with $K \geq \sqrt{n}$ communities lies at~\eqref{eq:new_intro}. 
Moreover, they indicate that the optimal algorithms in this regime appear to be unrelated to spectral methods, echoing similar observations made for Gaussian mixture models~\cite{Even24}.

\paragraph{Organisation and notation.}
In Section~\ref{sec:setting}, we describe  the statistical setting, and we recall the Conjecture~1.4 of~\cite{carpentier2025phase} that we prove in this paper. In Section~\ref{sec:main}, we introduce the family of motifs used for successful recovery of the communities above the threshold~\eqref{eq:new_intro}, and we state our main results. The proofs are provided in Section~\ref{sec:key:blow-up} and~\ref{sec:mean:variance:blow_up}.

We denote by $\log(\cdot)$ the natural logarithm, and by $\log_{n}(\cdot)$ the logarithm in base $n$, i.e.\ $K=n^{\log_{n}(K)}$.
For better readability, we write ${\bf 1}\ac{A}$ for the indicator function ${\bf 1}_{A}$ of the set $A$.
We also use the short notation $[D]$ for the set $\ac{1,\ldots,D}$. 
In the discussion of the results, we use the symbol $a\lesssim_{D} b$ (respectively $a\lesssim_{\log} b$) to state that $a$ is smaller than $b$ up to a possible polynomial factor in $D$ (resp. up to a poly-log factor), and we use $a\asymp b$ to state that $a\lesssim b \lesssim a$.

\section{Setting and goal}\label{sec:setting}

\subsection{Stochastic block model and motif counting}

\paragraph{Stochastic Block Model.} Let $K$ be an integer in $[2,n]$, and $p,q\in (0,1)$, with $p>q$. We assume that $Y^*$ is sampled according to the following distribution:
\begin{enumerate}
\item  $z_{1},\ldots,z_{n}\stackrel{i.i.d.}{\sim} \text{Uniform}\ac{1,\ldots,K}$;
\item Conditionally on $z_{1},\ldots,z_{n}$, the entries $(Y^*_{ij})_{i<j}$ are sampled independently, with $Y^*_{ij}$ distributed as a Bernoulli random variable with parameter $q+\lambda{\bf 1}\ac{z_{i}=z_{j}}$, where $\lambda=p-q>0$. 
\end{enumerate}
To get simpler formulas, in the remaining of the paper, we will work with the quasi-centered adjacency matrix
\begin{equation}\label{eq:definition:Y}
Y_{ij} = (Y^{*})_{ij}- q \ , \text{ for any }1\leq i < j \leq n\enspace .
\end{equation}
We will also use the notation
\begin{equation}\label{eq:bar-proba}
\bar q=q(1-q),\quad \text{and}\quad \bar p= \bar q +\lambda (1-2q).
\end{equation}

\paragraph{Community recovery.}
Our goal is to recover the partition $\mathcal{C}^*=\ac{\mathcal{C}^*_{1},\ldots,\mathcal{C}^*_{K}}$  of $[n]$ induced by the community assignment $z_{1},\ldots,z_{n}$, i.e.
$$\mathcal{C}^*_{k}:=\ac{i\in [n]:z_{i}=k},\quad k=1,\ldots,K.$$
We define the membership matrix 
$$[x_{ij}]_{1\leq i<j \leq n}=[\mathbf{1}\ac{i,j\ \text{are in the same community}}-1/K]_{1\leq i<j \leq n}.$$
If, for each $1\leq i<j \leq n$, we produce an estimator $\hat x_{ij}$ equal to $x_{ij}$ with probability $1-o(n^{-2})$, then the connected components of $\hat x$ coincide with $\mathcal{C}^*$ with probability $1-o(1)$. Hence, we focus henceforth on the problem of estimating $x_{ij}$. Our algorithm is based on motif counting, introduced below.

\paragraph{Motif counting.} Let $G=(V,E)$ be a graph, henceforth referred as motif, on a node set $V=\ac{v_{1},\ldots,v_{D}}$. 
Fix $1\leq i < j\leq n$ and denote $\Pi_{ij}$ the  set of injections $\pi:V\to \ac{1,\ldots,n}$ fulfilling $\pi(v_{1})=i$ and $\pi(v_{2})= j$. Then, we define the motif counting  
$$P_G = \sum_{\pi\in \Pi_{ij}} P_{G,\pi},\quad \text{with} \quad P_{G,\pi}(Y)= \prod_{(v,v')\in E} Y_{\pi(v),\pi(v')}\enspace .$$
We refer to $P_{G}$ as a motif counting since, when applied to the vanilla adjacency matrix $Y^*$, the polynomial $P_{G}(Y^*)$ merely counts the occurence in $Y^*$ of the motif $G$ ``attached''
at $i$ and $j$.

\subsection{Conjecture 1.4 from~\cite{carpentier2025phase}}

Our goal is to prove Conjecture~1.4 of~\cite{carpentier2025phase} that  predicts that community recovery is possible above the threshold~\eqref{eq:new_intro} through the enumeration of specific motifs satisfying certain structural properties. Before recalling the details of the conjecture, we provide some heuristic arguments and sketchy computations  motivating it.

To start with, we notice that the signal Condition~\eqref{eq:new_intro} can be written 
$\lambda\geq_{\log} \pa{\lambda\over K}^{1-\log_{n}(K)}\vee q^{1-\log_{n}(K)}$ or equivalently 
\begin{equation}\label{eq:new_intro2}
\lambda\geq_{\log} \pa{K\over n}\vee q^{1-\log_{n}(K)}.
\end{equation}
In the density regime $q\asymp n^{-1/r}$ with $r> 1$, under the condition $\lambda\geq_{\log} q^{1-\log_{n}(K)}$, we have
\begin{equation}\label{eq:cond:lambda/q}
\lambda^r\geq_{\log} q^r K\asymp {K\over n}.
\end{equation}
Combining \eqref{eq:cond:lambda/q} and \eqref{eq:new_intro2}, we then observe that when $q\asymp n^{-1/r}$ with $r> 1$ the signal Condition~\eqref{eq:new_intro} reduces to
\begin{equation}\label{eq:new_intro3}
\lambda\geq_{\log} q^{1-\log_{n}(K)}.
\end{equation}

Let us motivate the structural properties of $G$ appearing in the conjecture. 
First of all, we can readily check that for non-connected motifs, the count $P_{G}(Y)$ has zero correlation with $x_{ij}$.
Hence, we restrict our attention to connected motifs $G$. 
Introducing the notation
\begin{equation}\label{eq:conditional:P}
\bbP_{ij}:=\bbP\cro{\cdot|z_{i}=z_{j}},\quad\text{and}\quad \bbP_{\not ij}:=\bbP\cro{\cdot|z_{i}\neq z_{j}},
\end{equation}
a direct computation gives $\bbE_{\not ij}\cro{P_{G}}=0$. So,
for testing  $z_{i}=z_{j}$ against $z_{i}\neq z_{j}$ with a small error, we seek for a connected motif $G$ fulfilling
\begin{equation}\label{eq:test:success}
\bbE_{ij}\cro{P_{G}}^2 \gg  \mathrm{var}_{ij}(P_{G}) \vee  \mathrm{var}_{\not ij}(P_{G}).
\end{equation}
The expectation can be simply evaluated (see Equations~\eqref{eq:proof:ingredient:basis:mean} and~\eqref{eq:mean:12:pi:blow-up})
\begin{equation}\label{eq:E12:intro}
\bbE_{ij}\cro{P_{G}} \asymp \pa{n \lambda^{|E|/(|V|-2)}\over K}^{|V|-2}=\pa{n \lambda^{r}\over K}^{|V|-2},
\end{equation}
with $r:=|E|/(|V|-2)$.
So, according to~\eqref{eq:cond:lambda/q}, we have $\bbE_{ij}\cro{P_{G}}\gtrsim 1$. Given a labeling $\pi$ of $V$, we write $\pi(G)$ for the corresponding labelled graph with nodes $\pi(V)$ and edges $\pi(E)=\ac{(\pi(u),\pi(v)):(u,v)\in E}$. We define $|E^{\neq}|$ as the number of edges in $\pi(G)$  between nodes of distinct communities.
A direct computation gives
\begin{align}
\bbE_{ij}\cro{P_{G,\pi}^2} & =\bbE_{ij}\cro{\prod_{\substack{(i,j)\in \pi(E) \\ z_{i} =z_{j}}}\bar p\prod_{\substack{(i,j)\in \pi(E) \\ z_{i} \neq z_{j}}}\bar q}
\ =\  \bbE_{ij}\cro{ \bar p^{|E|-|E^{\neq}|}\bar q^{|E^{\neq}|}} \ \asymp\ \lambda^{|E|}\ \bbE_{ij}\cro{\pa{q\over \lambda}^{|E^{\neq}|}},\label{eq:moment2}
\end{align}
where $\bar p$, $\bar q$ are defined in~\eqref{eq:bar-proba}, with $\bar p\sim \lambda$ since $\lambda \geq_{\log} K^{1/r} q$ according to~\eqref{eq:cond:lambda/q}.
As explained in~\cite{carpentier2025phase}, we expect that the leading terms in the variance decomposition 
$$\mathrm{var}_{ij}(P_{G})=\sum_{\pi,\pi'} \mathrm{cov}_{ij}(P_{\pi,G},P_{\pi',G}),$$
are those for which $\pi(G)=\pi'(G)$.
 Writing $|\mathrm{Aut}(G)|\leq |V|^{|V|}$ for the number of automorphisms of $G$, and recalling \eqref{eq:moment2} and $|E|=r(|V|-2)$, we then expect to have
\begin{align}
\mathrm{var}_{ij}(P_{G})\vee  \mathrm{var}_{\not ij}(P_{G})&\stackrel{?}{\lesssim} \sum_{\pi\in \Pi_{V}} |\mathrm{Aut}(G)|\,\mathrm{var}_{ij}(P_{G,\pi}) 
\ \lesssim\ \pa{n\lambda^r}^{|V|-2}\ \bbE_{ij}\cro{(q/\lambda)^{|E^{\neq}|}},\label{eq:basic:var}
\end{align}
where $\lesssim$  hides some factors depending on $|E|$ and $|V|$ only. 

Let $\ell(G)$ denote the number of distinct communities in $\pi(G)$.
Combining~\eqref{eq:basic:var} with~\eqref{eq:cond:lambda/q} and $\bbP_{ij}[\ell(G)=\ell]\asymp K^{-(|V|-1-\ell)}$, we then get
\begin{align}
\mathrm{var}_{ij}(P_{G})\vee  \mathrm{var}_{\not ij}(P_{G})&\lesssim  \pa{n\lambda^r\over K}^{|V|-2}\ \sum_{\ell=1}^{|V|-1} K^{\ell-1} \bbE_{ij}\cro{K^{-|E^{\neq}|/r}\Big|\ell(G)=\ell}. \label{eq:V12:intro}
\end{align}
Taking for granted that the upper-bound in \eqref{eq:V12:intro} is valid up to constants or log factors, comparing~\eqref{eq:V12:intro} and~\eqref{eq:E12:intro},  a test based on $P_{G}$ 
will fulfill~\eqref{eq:test:success}, as soon as the sum in~\eqref{eq:V12:intro} remains small. This will occur in particular if the motif $G$ fulfills $|E^{\neq}| \geq r(\ell(G)-1)$ for any partition of the nodes into $\ell(G)$ communities. This leads to the following conjecture of~\cite{carpentier2025phase}.

\begin{conjecture}[Conjecture~1.4 in~\cite{carpentier2025phase}]\label{conjecture}
Let $G=(V,E)$ be a connected graph, with $(v_{1},v_{2})\notin E$, and such that 
 for any partition of $V$ in $\ell$ groups, with $v_{1},v_{2}$ in the same group,
the number $|E^{\neq}|$ of edges in $E$ between distincts groups fulfills 
\begin{equation}\label{eq:Eneq}
|E^{\neq}| \geq r(\ell-1)
\end{equation}
with $r:=|E|/(|V|-2)$. Then, if $q\asymp n^{-1/r}$  and $\lambda \geq_{\log} q^{1-\log_{n}(K)}$, we can recover the communities in the SBM with an algorithm based on  $P_{G}$, counting of the occurrence of the motif $G$.
\end{conjecture}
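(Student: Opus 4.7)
The plan is to verify the signal-to-noise inequality \eqref{eq:test:success} for the test statistic $P_G$ associated with any motif $G$ satisfying \eqref{eq:Eneq}, and then boost this two-point test to full partition recovery by applying it to every pair $1\leq i<j\leq n$ and union-bounding (together with a high-moment strengthening of Chebyshev to obtain the $o(n^{-2})$ per-pair error required by Section~\ref{sec:setting}). The work then splits into three phases: compute the mean, control the two variances using the cut hypothesis, and combine.

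Computing the mean is routine. Conditioning on $z$ and using $\bbE[Y_{e}\mid z]=\lambda\mathbf{1}\{z_{\text{endpoints equal}}\}$, the connectedness of $G$ forces every vertex of $\pi(G)$ to share the community of $v_{1},v_{2}$ for $\bbE[P_{G,\pi}\mid z]$ to be non-zero. Under $\bbP_{ij}$ this occurs with probability $K^{-(|V|-2)}$, so summing over the $\asymp n^{|V|-2}$ injections gives $\bbE_{ij}[P_G]\asymp (n\lambda^{r}/K)^{|V|-2}$ as in \eqref{eq:E12:intro}, while $\bbE_{\not ij}[P_G]=0$ since $z_{i}\neq z_{j}$ contradicts the monochromaticity of the connected graph $\pi(G)$. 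Under $\lambda\gtrsim_{\log}q^{1-\log_{n}K}$ and $q\asymp n^{-1/r}$, the ratio $\theta:=n\lambda^{r}/K$ is at least polylogarithmic, so the signal diverges.

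The crux is bounding the variance. I would expand $\mathrm{var}_{ij}(P_G)=\sum_{\pi,\pi'}\mathrm{cov}_{ij}(P_{G,\pi},P_{G,\pi'})$ and classify pairs by the isomorphism type of the union graph $H=(\pi(V)\cup\pi'(V),\,\pi(E)\cup\pi'(E))$ marked by $v_{1},v_{2}$. Writing $s:=2(|V|-2)-(|V_{H}|-2)$ for the extra shared vertices and $t:=2|E|-|E_{H}|$ for the shared edges, the realization count of a fixed shape is $\Theta(n^{|V_H|-2})$, and $\bbE_{ij}[P_{G,\pi}P_{G,\pi'}]$ decomposes over community partitions of $V_{H}$ keeping $v_{1},v_{2}$ together; each $\ell$-class partition contributes, up to constants, $\lambda^{|E_{H}|-|E_{H}^{\neq}|}\bar q^{|E_{H}^{\neq}|}$ weighted by a partition probability of order $K^{\ell+1-|V_H|}$.

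The central combinatorial step, and the main obstacle I foresee, is a \emph{cut-transfer lemma}: every admissible union graph $H$ satisfies $|E_{H}^{\neq}|\geq r(\ell-1)$ for every partition of $V_{H}$ into $\ell$ classes with $v_{1},v_{2}$ together. Restricting the partition to each copy of $G$ gives $|\pi(E)^{\neq}|\geq r(\ell_{1}-1)$ and $|\pi'(E)^{\neq}|\geq r(\ell_{2}-1)$ by \eqref{eq:Eneq}, and $\ell_{1}+\ell_{2}=\ell+c$ with $c\geq 1$ (the class of $v_{1},v_{2}$ is shared). Since $|E_{H}^{\neq}|=|\pi(E)^{\neq}|+|\pi'(E)^{\neq}|-(\text{cut shared edges})$, the lemma reduces to establishing $r(c-1)\geq(\text{cut shared edges})$; this requires a careful accounting on the shared-edge subgraph, together with the key consequence of \eqref{eq:Eneq}---obtained by applying the cut hypothesis with $V(G)\setminus V(G')$ as singletons---that any subgraph of $G$ on $s+2$ vertices containing $v_{1},v_{2}$ has at most $rs$ edges, i.e.\ $t\leq rs$. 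Once the lemma is in place, $K^{\ell-1}(q/\lambda)^{r(\ell-1)}\asymp 1$ collapses the sum over $\ell$; the minimal-overlap shape ($s=t=0$) has zero covariance by direct computation; and every other shape is suppressed by a factor $\theta^{-t/r}(K/n)^{s-t/r}\leq \max\!\big(\theta^{-1/r},\,K/n\big)=o(1)$ under the signal condition and $K=o(n)$. The parallel analysis of $\mathrm{var}_{\not ij}(P_G)$ is simpler because $\bbE_{\not ij}[P_G]=0$ forces every surviving pair to keep $i,j$ in distinct components of $H$ after removal of shared edges. Plugging both variance bounds into \eqref{eq:test:success}, and upgrading Chebyshev to $o(n^{-2})$ concentration by a higher-moment version of the same shape expansion, yields the claimed recovery.
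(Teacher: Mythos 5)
The decisive step of your variance bound --- the ``cut-transfer lemma'' asserting that every union graph $H$ of two labelled copies of $G$ inherits the cut property $|E_H^{\neq}|\ge r(\ell-1)$ --- is precisely the part you leave open, and the reduction you propose for it is false. Writing $|E_H^{\neq}|=|\pi(E)^{\neq}|+|\pi'(E)^{\neq}|-(\text{cut shared edges})$ and invoking \eqref{eq:Eneq} on each copy leads you to require $r(c-1)\ge(\text{cut shared edges})$, but this inequality already fails for the motifs constructed in this paper: take $G=G_{\kappa,\gamma,a}$ with $\gamma\ge 2$, let the two injections coincide on $v_1,v_2$ and on two consecutive layers $L_\omega,L_{\omega+1}$ (so that the $\gamma^2$ complete-bipartite edges between these layers are shared), and consider the partition with one class $\{v_1,v_2\}\cup\pi(L_\omega)$, a second class $\pi(L_{\omega+1})$, and singletons elsewhere; then $c=2$ while the number of cut shared edges is at least $\gamma^2>r=\gamma+a$. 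In such configurations the union graph may still satisfy the desired cut bound, but only because the per-copy inequalities are far from tight --- slack that your reduction discards --- so the central combinatorial lemma of your argument is neither proved nor provable along the route you sketch, and its validity for an arbitrary $G$ obeying \eqref{eq:Eneq} is left unestablished. The paper never needs any property of the union graph: it conditions on the communities of the first copy, uses the edges $E^{(2)}_{\Delta}$ incident to the private vertices of the second copy to extract a factor $K^{-(|V|-2-u)}$, and then applies the hypothesis \eqref{eq:Eneq} only to the single motif $G$ --- once in \eqref{eq:up_E_12:blow-up} to replace $(\bar q/\bar p)^{|E^{\neq}|}$ by $(\bar q/\bar p)^{(\ell(z)-1)(\gamma+a)}$ on the first copy, and once more in Lemma~\ref{lem:cap} to get $|E_{\cap}|\le (\gamma+a)u$ --- together with a count of the connected components of $(\pi^{(1)}(V),E^{(1)}_{\Delta})$ to control the number of distinct communities.

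A second gap is your amplification step: ``upgrading Chebyshev to $o(n^{-2})$ concentration by a higher-moment version of the same shape expansion'' is not routine, since the $2m$-th moment of $P_G$ involves unions of $2m$ labelled copies and would demand cut-type control for those unions as well, compounding the missing lemma above. The paper avoids higher moments altogether through the median-of-means estimator \eqref{eq:hatx:blow-up} over $24\log n$ disjoint vertex blocks, which requires only the second-moment bound of Proposition~\ref{prop:mean:variance:blow_up}; you could adopt that device, but as written this part of your plan is also unsubstantiated.
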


In the next section, for any rational $r>1$, 
\begin{itemize}
\item we construct a connected motif $G$ fulfilling the key condition~\eqref{eq:Eneq} ;
\item we prove that when  $q\asymp n^{-1/r}$  and~\eqref{eq:new_intro} hold, we can recover the communities in the SBM with an algorithm based on $P_{G}$.
\end{itemize}

The family of motifs $G$ is based on the blow-up of a cycle. Writing $r= \gamma + a  $ with $\gamma$ an integer larger than $1$ and $a$ a rational in $(0,1)$, the motif is constructed by considering a $\gamma$-blow-up of a cycle of length $\kappa$, that we link evenly to $i$ and $j$ with  $a\kappa\gamma$ edges ---called fasteners in the following. The length $\kappa$ of the cycle is chosen in such a way that $a\kappa\gamma$ is a large enough even integer. Let us describe with more details this construction.

\section{Main Result}\label{sec:main}

\subsection{Construction of the blow-up graph with fasteners}

Consider any rational number $a\in (0,1)$, any positive integer $\gamma$. Fix $\kappa\geq 3\vee (2a^{-1})$ an integer such that $a\kappa \gamma$ is an even integer. In the following, we construct the blow-up graph  with fasteners $G_{\kappa,\gamma,a}=(V,E)$.

Let $C_\kappa$ be a simple cycle on vertices $[\kappa]$. For each vertex $\omega\in [\kappa]$, define a \emph{layer}
\[
L_\omega := \{ v_{\omega,t}: t=1,\dots,\gamma\},
\]
consisting of $\gamma$ vertices that represent the ``blow-up'' of the cycle node $\omega$. The collection of all such vertices forms the \emph{cycle nodes} of the graph:
\[
V_{\mathrm{cyc}} := \bigcup_{\omega=1}^\kappa L_\omega, \qquad |V_{\mathrm{cyc}}| = \kappa \gamma.
\]
Between every two consecutive layers $L_\omega$ and $L_{\omega+1}$ (with the convention $L_{\kappa+1}=L_1$), we insert a complete bipartite graph:
\[
E_{\mathrm{cyc}} := \bigcup_{\omega=1}^\kappa \{(v_{\omega,t},v_{\omega+1,t'}) : 1\le t,t'\le \gamma\,\},
\]
so that every vertex in a layer $L_\omega$ is connected to all $\gamma$ vertices in $L_{\omega-1}$ and all $\gamma$ vertices in $L_{\omega+1}$. This produces a total of $|E_{\mathrm{cyc}}| = \kappa \gamma^2$ edges and the graph induced by $G_{\mathrm{cyc}}=(V_{\mathrm{cyc}},E_{\mathrm{cyc}})$ is $2\gamma$-regular. We refer to $G_{\mathrm{cyc}}$ as the cycle part of the graph $G$. 

We then introduce the two additional \emph{distinguished} vertices $v_1$ and $v_2$, and define the set of nodes $V := V_{\mathrm{cyc}} \cup \{v_1,v_2\}$. 
We connect $v_{1},v_{2}$ to $V_{\mathrm{cyc}}$ in such a way that $|E|=r(|V|-2)=(\gamma+a)(|V|-2)$. Accordingly, we connect $v_{1}$ or $v_{2}$ to each node in $V_{\mathrm{cyc}}$ with frequency $a$. 
We denote by 
 $V_{\mathrm{fst}}\subseteq V_{\mathrm{cyc}}$ the cyclic vertices to be connected to $v_1$ or $v_2$,  which we call the \emph{fastener nodes}. Informally, the set $V_{\mathrm{fst}}$ whose cardinality is equal to $a \kappa \gamma$ is defined such that the nodes $V_{\mathrm{fst}}$ are as evenly spread as possible. More formally, for each layer \(\omega=1,\dots,\kappa\), we define the integers $s_\omega$ by
\begin{equation}\label{eq:definition:Fastener}
s_0=0,\qquad s_\omega = \lfloor a \omega \gamma \rfloor - (s_{0}+\ldots+s_{\omega-1})\ . 
\end{equation}
Note, that $s_{\omega}\in \{\lfloor a \gamma\rfloor, \lceil a \gamma\rceil\}$. In layer $L_\omega$, exactly $s_\omega$ vertices are selected into $V_{\mathrm{fst}}$. By enumerating $V_{\mathrm{fst}}$ in the lexicographic order and alternatively assigning these nodes to two sets $V_{\mathrm{fst},1}$ or to $V_{\mathrm{fst},2}$,  we partition $V_{\mathrm{fst}}$ into two subsets of size $a\kappa\gamma/2$ --which is  an integer, as we have assumed $a\kappa \gamma$ to be an even integer.

Finally, we introduce the collection $E_{\mathrm{fst}}$ of \emph{fastener edges} by 
\[
E_{\mathrm{fst}} := \{\, (v_1,v) : v\in V_{\mathrm{fst},1} \,\}\;\cup\; \{\, (v_2,v) : v\in V_{\mathrm{fst},2} \,\}\enspace .
\]
The complete edge set of the graph is then $E := E_{\mathrm{cyc}} \cup E_{\mathrm{fst}}$. This construction is illustrated in Figure~\ref{fig:illBU}.

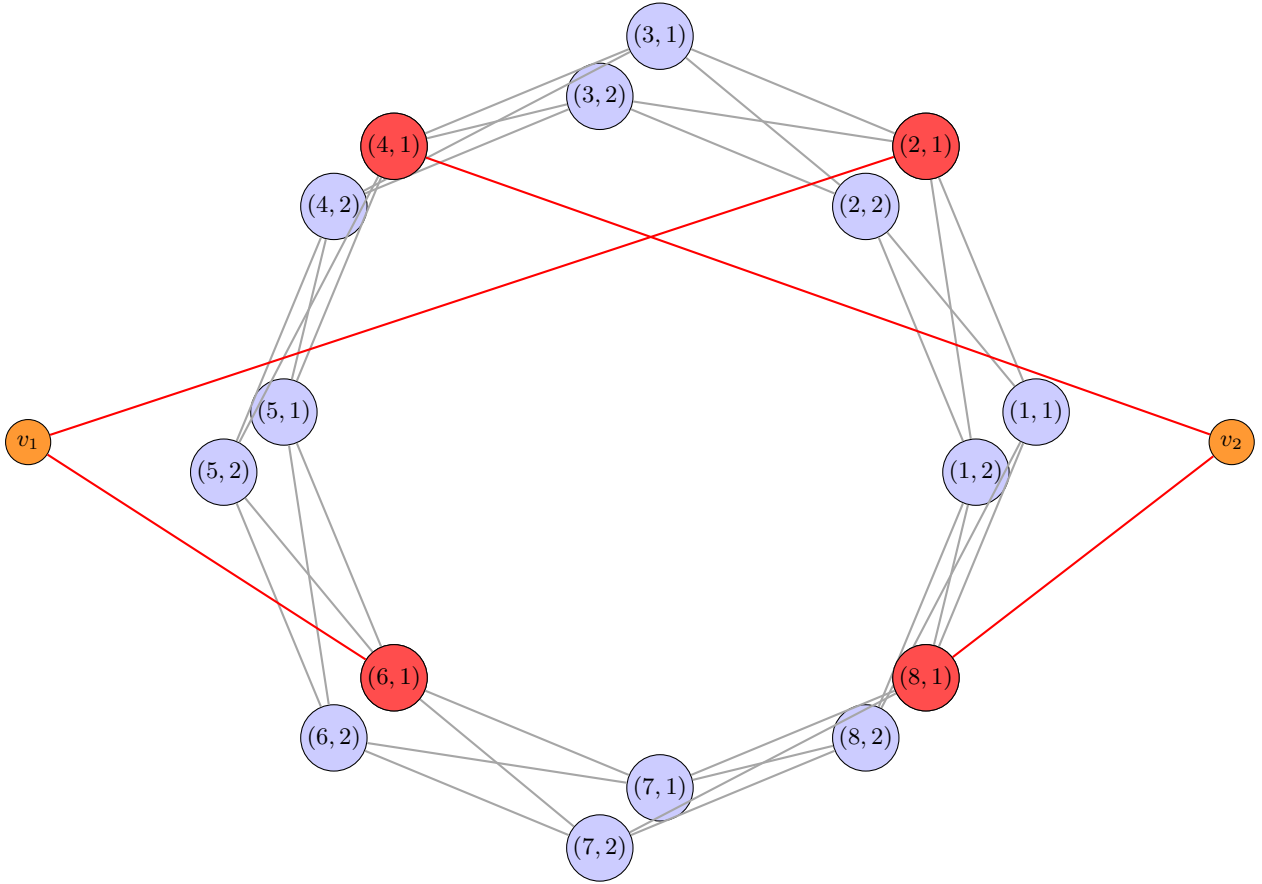
\begin{figure}[h]
\begin{center}
\begin{tikzpicture}[scale=2, every node/.style={circle, draw, minimum size=6mm, inner sep=1pt, font=\small}]

\def\kappa{6}
\def\gamma{2}
\def\radius{2.5}
\def\shift{0.2}

\foreach \w in {1,...,8}{
    \pgfmathsetmacro{\angle}{45*(\w-1)}
    \pgfmathsetmacro{\x}{\radius*cos(\angle)}
    \pgfmathsetmacro{\y}{\radius*sin(\angle)}
    \node[fill=blue!20] (n\w1) at ({\x+\shift}, {\y+\shift}) {$(\w,1)$};
    \node[fill=blue!20] (n\w2) at ({\x-\shift}, {\y-\shift}) {$(\w,2)$};
}

\foreach \w in {1,...,8}{
    \pgfmathtruncatemacro{\next}{mod(\w,8)+1}
    \foreach \t in {1,2}{
        \foreach \tp in {1,2}{
            \draw[thick,gray!70] (n\w\t) -- (n\next\tp);
        }
    }
}

\node[fill=orange!80] (v1) at (-4,0) {$v_1$};
\node[fill=orange!80] (v2) at (4,0) {$v_2$};

\foreach \w/\t in {2/1,4/1,6/1,8/1}{
    \node[fill=red!70] at (n\w\t) {$(\w,1)$};
}

\draw[thick, red] (v1) -- (n21);
\draw[thick, red] (v1) -- (n61);
\draw[thick, red] (v2) -- (n41);
\draw[thick, red] (v2) -- (n81);

\end{tikzpicture}
\end{center}
\caption{Illustration of the blow-up graph with fasteners $G_{\kappa,\gamma,a}$, in the case where $\gamma = 2, \kappa = 8, a=0.25$. The distinguished nodes $v_1,v_2$ are in orange, the {\it fastener nodes} (in $V_{\mathrm{fst}}$) are in red, the other cycle nodes are in blue. The {\it fastener edges} (in $E_{\mathrm{fst}}$) are in red, while the other edges are in gray.}\label{fig:illBU}
\end{figure}

Interestingly, we observe
\begin{align*}
|V| &= \kappa \gamma + 2 \ ; |E| = \kappa \gamma^2 + a \kappa \gamma\ . 
\end{align*}
so that the graph $G_{\kappa,\gamma, a}$ satisfies
\[
\frac{|E|}{|V|-2}= \gamma + a\ . 
\]

 The following proposition ensures that the motif $G_{\kappa,\gamma,a}$ fulfills the Condition~\eqref{eq:Eneq} required in Conjecture~\ref{conjecture}. We recall that  $\kappa\geq (3\vee 2/a)$ is such $a \kappa\gamma$ is an even integer.

\begin{proposition}\label{prop:key:blow-up}
Fix any positive integer $I\leq |V_{\mathrm{cyc}}|$ and consider any partition of $V$ into $I+1$ communities such that both $v_1$ and $v_2$ are in the same community. Define $E^{\neq}\subset E$ as the set of edges between nodes of distinct communities. Then, we have
\[
|E^{\neq}|\ge I (\gamma +a)\enspace .
\]
\end{proposition}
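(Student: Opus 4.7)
Set $A_k := C_k \cap V_{\mathrm{cyc}}$ for $k=0,\ldots,I$, so each $A_k$ with $k\geq 1$ is a non-empty subset of $V_{\mathrm{cyc}}$, while $A_0$ may be empty. Let $y_\omega := |A_0 \cap L_\omega|$ for $\omega \in [\kappa]$, and let $x_\omega^k := |C_k \cap L_\omega|$. The identity
\[
|E^{\neq}| \;=\; |E_{\mathrm{cyc}}^{\neq}| \,+\, |V_{\mathrm{fst}} \setminus A_0|
\]
splits the count into its cycle and fastener contributions, and I would treat two cases.

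\emph{First case, $A_0 = \emptyset$.} All fastener edges are then cross-edges, giving $|V_{\mathrm{fst}} \setminus A_0| = a\kappa\gamma$. For $I=1$ the bound reduces to $a\kappa\gamma \geq 2\gamma \geq \gamma+a$, which follows from $\kappa \geq 2/a$ and $\gamma \geq 1$. For $I \geq 2$, each $A_k$ ($k\geq 1$) is a proper non-empty subset of $V_{\mathrm{cyc}}$, and since the $\gamma$-blow-up of $C_\kappa$ is $2\gamma$-edge-connected (Menger with $\gamma$ parallel routes in each of the two cycle directions), $\partial_{\mathrm{cyc}} A_k \geq 2\gamma$. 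Summing the $I$ boundaries and dividing by two yields $|E_{\mathrm{cyc}}^{\neq}| \geq \gamma I$, so that $|E^{\neq}| \geq \gamma I + a \kappa\gamma \geq I(\gamma+a)$ using $I \leq |V_{\mathrm{cyc}}| = \kappa\gamma$.

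\emph{Second case, $A_0 \neq \emptyset$.} Now all $I+1$ subsets $A_0,\ldots,A_I$ are proper non-empty in $V_{\mathrm{cyc}}$, and the same double-counting yields $|E_{\mathrm{cyc}}^{\neq}| \geq \gamma(I+1)$, which already closes the bound when $Ia \leq \gamma$. In the complementary regime $Ia > \gamma$, I would sharpen the cycle contribution by a Cauchy--Schwarz bound on the non-$A_0$ portion of each layer. Writing $z_\omega^k := x_\omega^k$ for $k\geq 1$ with $\sum_{k\geq 1} z_\omega^k = \gamma - y_\omega$, one has $\sum_{k\geq 1} z_\omega^k z_{\omega+1}^k \leq (\gamma - y_\omega)(\gamma - y_{\omega+1})$, hence for every consecutive pair of layers
\[
\gamma^2 - \sum_{k=0}^{I} x_\omega^k x_{\omega+1}^k \;\geq\; \gamma(y_\omega + y_{\omega+1}) - 2\, y_\omega y_{\omega+1}.
\]
Combined with the fastener estimate $|V_{\mathrm{fst}} \setminus A_0| \geq \sum_\omega \max(0, s_\omega - y_\omega)$ (obtained by allowing the adversary to cover as many fasteners as possible by $A_0$ within each layer), this expresses a lower bound on $|E^{\neq}|$ solely in terms of the profile $(y_\omega)_\omega$, subject to $\sum_\omega y_\omega = |A_0| \leq \kappa\gamma - I$.

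The key point is that the even-spreading constraint $s_\omega \in \{\lfloor a\gamma\rfloor, \lceil a\gamma\rceil\}$ forces a trade-off: either $y_\omega < s_\omega$ in some layer, in which case the missing fastener contributes to the cross count, or $y_\omega \geq s_\omega$ in every layer, preventing $A_0$ from being concentrated in a short arc. By concavity of $y \mapsto \gamma y - y^2$ and the rearrangement inequality (which maximises $\sum_\omega y_\omega y_{\omega+1}$ under concentration of the $y_\omega$), the minimiser of the combined profile functional takes only two values $\gamma$ and $\lceil a\gamma\rceil$, arranged on two consecutive arcs of lengths $m$ and $\kappa-m$ with $m\gamma + (\kappa-m) a\gamma = \kappa\gamma - I$. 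Plugging this extremal profile into the sum and simplifying algebraically recovers the target bound $I(\gamma+a)$, with the slack from Cauchy--Schwarz and the discreteness of $s_\omega$ absorbing the lower-order terms.

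The main obstacle is precisely this last optimisation: balancing the two-sided cycle/fastener trade-off under the global constraint that $I$ distinct non-$A_0$ parts are used (rather than a single one, which is what makes the Cauchy--Schwarz bound tight). A careful but elementary rearrangement argument, together with the spreading property of $V_{\mathrm{fst}}$, is what reconciles these constraints and yields the sharp bound.
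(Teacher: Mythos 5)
Your decomposition $|E^{\neq}| = |E_{\mathrm{cyc}}^{\neq}| + |V_{\mathrm{fst}} \setminus A_0|$ and the treatment of the case $A_0 = \emptyset$ (all fasteners cross, plus boundary $\geq 2\gamma$ per non-trivial community by edge-connectivity of the blow-up) match the paper's approach and are correct. The gap is in the second case: when $A_0 \neq \emptyset$ and $aI > \gamma$, you outline a Cauchy--Schwarz/rearrangement optimisation over the profile $(y_\omega)_\omega$ and explicitly concede that ``the main obstacle is precisely this last optimisation.'' This is the crux of the proposition, and it is not carried out; the claim that the minimiser ``takes only two values $\gamma$ and $\lceil a\gamma\rceil$, arranged on two consecutive arcs'' and that ``plugging this extremal profile in ... recovers the target bound'' is asserted, not proved, and the interaction between the rearrangement step, the integrality of $s_\omega$, and the constraint that $I$ distinct non-$A_0$ parts must be used is exactly where the difficulty lies. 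As written, the proof does not close.

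For comparison, the paper avoids any quadratic optimisation. It double-counts $2|E^{\neq}| \geq \sum_{k=0}^I |E_{\mathrm{cyc},k}^{\uparrow}| + 2|V_{\mathrm{fst},\neq 0}|$ and then proves two purely linear lemmas: (i) each non-zero community $k\geq 1$ has cycle boundary $\geq 2\gamma$ (your edge-connectivity observation), and (ii) $|E_{\mathrm{cyc},0}^{\uparrow}| + 2|V_{\mathrm{fst},\neq 0}| \geq 2a|V_{\mathrm{cyc},\neq 0}| \geq 2aI$. Lemma (ii) is the replacement for your unfinished optimisation: it works by partitioning the layers into those that meet $V_{\mathrm{cyc},0}$ and those that do not, then (in the intermediate case) bounding the boundary of $V_{\mathrm{cyc},0}$ via a count of connected components of these layer-arcs, and bounding the fastener loss from rounding $\lfloor a r_i \gamma\rfloor$ by the same number of connected components, so the two error terms cancel exactly. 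This gives a clean linear trade-off (cycle boundary of community~$0$ compensates missing fasteners), rather than an extremal profile computation. I'd suggest you abandon the Cauchy--Schwarz route and instead try to prove an inequality of the form $|E_{\mathrm{cyc},0}^{\uparrow}| + 2|V_{\mathrm{fst},\neq 0}| \geq 2a|V_{\mathrm{cyc},\neq 0}|$ directly, which combined with your part (i) immediately yields the proposition.
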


\subsection{Counting Blow-up Motifs}

Consider a blow-up graph with fasteners $G=G_{\kappa,\gamma , a }$. For $i< j$, we remind that $\Pi_{i,j}$ is the set of injections $\pi:V\to \ac{1\ldots,n}$ such that $\pi(v_{1})=i$ and $\pi(v_{2})=j$, and we set
\begin{equation}\label{eq:def:R}
R_{ij}=\sum_{\pi \in \Pi_{i,j}} P_{G,\pi}(Y),\quad \text{with}\quad P_{G,\pi}(Y)=\prod_{(v,v')\in E}Y_{\pi(v),\pi(v')},
\end{equation}
where $Y$ is the ``centered'' adjacency matrix~\eqref{eq:definition:Y}.

In the following proposition, we control the mean and the variance of $R_{ij}$ both when $z_{i}=z_{j}$ and when $z_i\neq z_j$. We recall that the conditional probabilities
$\bbP_{ij}$ and $\bbP_{\not ij}$ are defined in~\eqref{eq:conditional:P}.

\begin{proposition}\label{prop:mean:variance:blow_up}
 Assume that $q\leq 1/4$, $q+2\lambda \leq 1$, and $n\geq 2\kappa\gamma +4$. We also assume that $a\kappa \gamma$ is an even integer and that $\kappa\geq 3\vee 2/a$. 
Let $i<j$ and let $R_{ij}$ be defined by \eqref{eq:def:R}.
We have 
\begin{align}
\bbE_{ij}\cro{R_{ij}}&={(n-2)!\over (n-\kappa\gamma-2)!} \left({\lambda^{ \gamma + a } \over K}\right)^{\kappa \gamma }, \label{eq:mean:12:prop:blow_up} \quad \quad 
\bbE_{\not ij}\cro{R_{ij}} = 0 \enspace . 
\end{align}
In addition, if for some $\rho>1$, 
\begin{align}
\left(\frac{\lambda^2}{2\bar q}  \right)^{\gamma + a} \geq \frac{2K^2(\kappa \gamma)^5 \rho}{n}\ ; \quad \quad    \lambda^{\gamma + a}\geq 
 \frac{2K(\kappa \gamma)^5 \rho}{n} \label{eq:cond1:blow_up}
 \end{align}
then, we have
\begin{align}
\mathrm{var}_{ij}(R_{ij})\bigvee  \mathrm{var}_{\not ij}(R_{ij})  & \leq \frac{4}{\rho} \cdot \bbE^2_{ij}\cro{R_{ij}} \label{eq:var:12:prop:blow_up}\enspace . 
\end{align}
\end{proposition}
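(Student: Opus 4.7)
The proof splits into the mean computation~\eqref{eq:mean:12:prop:blow_up} and the variance bound~\eqref{eq:var:12:prop:blow_up}. For the mean, I would exploit the identity $\bbE[Y_{ab}\mid z]=\lambda\,\mathbf{1}\{z_{a}=z_{b}\}$ together with the conditional independence of the entries $Y_{ab}$, which gives
\[
\bbE[P_{G,\pi}(Y)\mid z]=\lambda^{|E|}\,\mathbf{1}\{z\text{ is constant on each connected component of }\pi(G)\}.
\]
Since $G$ is connected, this reduces to the event that $z$ is constant on $\pi(V)$. Under $\bbP_{ij}$ this event has probability $K^{-(|V|-2)}=K^{-\kappa\gamma}$ because the $|V|-2$ labels of $\pi(V)\setminus\{v_{1},v_{2}\}$ are i.i.d.\ uniform on $[K]$; combined with $|\Pi_{ij}|=(n-2)!/(n-\kappa\gamma-2)!$ and $|E|=\kappa\gamma(\gamma+a)$, this yields~\eqref{eq:mean:12:prop:blow_up}. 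Under $\bbP_{\not ij}$ the same event is impossible because $z_{v_{1}}\neq z_{v_{2}}$, so $\bbE_{\not ij}[R_{ij}]=0$.

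For the variance I would expand
\[
\mathrm{var}_{ij}(R_{ij})=\sum_{\pi,\pi'}\mathrm{cov}_{ij}(P_{G,\pi},P_{G,\pi'}),\qquad \mathrm{var}_{\not ij}(R_{ij})=\sum_{\pi,\pi'}\bbE_{\not ij}[P_{G,\pi}P_{G,\pi'}],
\]
where the second equality uses $\bbE_{\not ij}[R_{ij}]=0$. When $\pi(V)\cap\pi'(V)=\{v_{1},v_{2}\}$, the hypothesis $(v_{1},v_{2})\notin E$ makes $\pi(E)$ and $\pi'(E)$ edge-disjoint, and the labels on $\pi(V)\setminus\{v_{1},v_{2}\}$ and $\pi'(V)\setminus\{v_{1},v_{2}\}$ are independent; the two monochromaticity events then factor and the covariance vanishes. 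Only pairs with $\pi(V)\cap\pi'(V)\supsetneq\{v_{1},v_{2}\}$ contribute, and I would parametrise them by the isomorphism class of the glued multigraph $\hat G:=\pi(G)\cup\pi'(G)$ on the combined vertex set $W=\pi(V)\cup\pi'(V)$, equipped with edge multiplicities $m_{e}\in\{1,2\}$. Conditional on $z$, each single edge contributes $\lambda\,\mathbf{1}\{\text{mono}\}$ and each doubled edge contributes $\bar p\,\mathbf{1}\{\text{mono}\}+\bar q\,\mathbf{1}\{\text{not mono}\}$; using $\bar p\lesssim\lambda$ in the regime~\eqref{eq:cond1:blow_up}, each shape's contribution reduces to a product of $\lambda$'s on monochromatic edges (all single edges must be monochromatic for the term to be nonzero) and $\bar q$'s on cross-community doubled edges, summed over community assignments of $W\setminus\{v_{1},v_{2}\}$.

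The crux, and the main obstacle, is to bound this sum uniformly over the glued shape $\hat G$, and this is where Proposition~\ref{prop:key:blow-up} enters decisively. For any partition of $\hat V$ with $v_{1},v_{2}$ in the same class, its restrictions to the two embedded copies of $G$ inside $\hat G$ yield partitions with $\ell_{1}$ and $\ell_{2}$ classes, and applying Proposition~\ref{prop:key:blow-up} to each copy gives lower bounds $(\gamma+a)(\ell_{1}-1)$ and $(\gamma+a)(\ell_{2}-1)$ on the number of cross-community edges in each copy. Combined with the elementary inequality $\ell_{1}+\ell_{2}\geq\ell+1$ (where $\ell$ is the total number of classes in $\hat V$, justified by the fact that both copies share the class of $v_{1}=v_{2}$), one obtains a lower bound of the form $|E^{\neq}|\gtrsim(\gamma+a)(\ell-1)$ on the doubled non-monochromatic edges (after accounting for the shared edges counted twice on the left-hand side). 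This combinatorial lower bound is precisely what is needed for the $\bar q^{|E^{\neq}|}$ factors to offset both the $K^{\ell-1}$ factor from summing over community labellings and the $n^{|\hat V|-2}$ factor from counting injections $\tilde\pi:\hat V\to[n]$. Summing over the finitely many gluing types (bounded by a polynomial in $\kappa\gamma$) and comparing against $\bbE_{ij}^{2}[R_{ij}]=[(n-2)!/(n-\kappa\gamma-2)!]^{2}(\lambda^{\gamma+a}/K)^{2\kappa\gamma}$, the two signal conditions in~\eqref{eq:cond1:blow_up} supply exactly the slack needed: the first absorbs the $\bar q/\lambda^{2}$ factors attached to doubled non-monochromatic edges, the second handles the $1/\lambda$ factors associated with single edges, and the $(\kappa\gamma)^{5}$ buffer swallows the polynomial combinatorial overhead. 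The delicate bookkeeping that tracks these factors down to the explicit constant $4/\rho$ is the most technical part of the proof.
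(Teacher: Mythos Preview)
Your mean computation is correct and matches the paper. For the variance, you have the right scaffolding---expand into pairs $(\pi,\pi')$, discard those with only $\{v_1,v_2\}$ in common, and invoke Proposition~\ref{prop:key:blow-up}---but the execution differs from the paper's in two places, and one of them is a real gap.

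First, a difference in strategy. The paper does \emph{not} apply Proposition~\ref{prop:key:blow-up} symmetrically to both embedded copies. It parametrises by $u=|\pi^{(1)}(V)\cap\pi^{(2)}(V)|-2$ and treats the two copies asymmetrically: condition on $z_{\pi^{(1)}(V)}$; since $G$ is connected, every vertex in $\pi^{(2)}(V)\setminus\pi^{(1)}(V)$ is tied to the shared part by an edge of $E_\Delta^{(2)}$, so the indicators on $E_\Delta^{(2)}$ force those $\kappa\gamma-u$ communities and yield a clean factor $K^{-(\kappa\gamma-u)}$. Proposition~\ref{prop:key:blow-up} is then applied \emph{once}, to the community partition of $\pi^{(1)}(V)$ alone, giving $|E^{\neq}|\ge(\gamma+a)(\ell(z)-1)$; the distribution of $\ell(z)$ is handled by a connected-component count on $(\pi^{(1)}(V),E_\Delta^{(1)})$. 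Your symmetric combination via $\ell_1+\ell_2\ge\ell+1$ only delivers $|E_\cap^{\neq}|\ge\tfrac12(\gamma+a)(\ell-1)$, which is a factor of two weaker and would not match the constants in~\eqref{eq:cond1:blow_up}.

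Second, the genuine gap. Writing the cross-moment as $\lambda^{2|E|}(\bar p/\lambda^2)^{|E_\cap|}(\bar q/\bar p)^{|E_\cap^{\neq}|}\cdot(\dots)$, a lower bound on $|E_\cap^{\neq}|$ is not enough: you also need an \emph{upper} bound on $|E_\cap|$ to tame the factor $(\bar p/\lambda^2)^{|E_\cap|}\approx\lambda^{-|E_\cap|}$. The paper extracts this as a separate application of Proposition~\ref{prop:key:blow-up} (Lemma~\ref{lem:cap}): apply it to the partition of $\pi^{(1)}(V)$ that places all $u+2$ shared vertices in one block and each remaining vertex in its own block; this forces $|E|-|E_\cap|\ge(\gamma+a)(\kappa\gamma-u)$, hence $|E_\cap|\le(\gamma+a)u$. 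That bound is precisely what turns the second condition in~\eqref{eq:cond1:blow_up} into control of the $[K(\bar p/\lambda^2)^{\gamma+a}/n]^u$ term, and your proposal does not supply it. Relatedly, your attribution of the $1/\lambda$ factors to ``single edges'' is off: single edges in $E_\Delta$ contribute $\lambda$ positively; the $1/\lambda\approx\bar p/\lambda^2$ factors come from the \emph{doubled monochromatic} edges in $E_\cap$. Finally, under $\bbP_{\not ij}$ Proposition~\ref{prop:key:blow-up} does not apply verbatim because $v_1,v_2$ lie in distinct blocks; the paper handles this with a one-line shift (replace $\ell(z)-1$ by $(\ell(z)-2)_+$), but it should be noted.
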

As for the clique counting or self-avoiding path counting problems in~\cite{carpentier2025phase}, relying on $R_{ij}$ alone together with a Markov type bound is not sufficient. For this reason, we rely again on a Median-of-Means post-processing and we use the same notation as in the latter. In particular, we fix $\Lambda= 24\log(n)$ and we assume for simplicity that $(n-2)/\Lambda$ is an integer. Recall 
the  definition of $N=(n-2)/\Lambda+2$ and let $J^{(1)},\ldots,J^{(\Lambda)}$ be a partition of $[n]\setminus\ac{i,j}$ into $L$ disjoint parts. For $\ell=1,\ldots, \Lambda$, we define $\Pi_{i,j}^{(\ell)}$ has the set of injections $\pi:V\to \ac{i,j}\cup J^{(\ell)}$, such that $\pi(v_{1})=i$ and $\pi(v_{2})=j$. Then, as in
 (31) in~\cite{carpentier2025phase}
for clique counts, we introduce the blow-up count  $R_{ij}^{(\ell)}:= \sum_{\pi \in \Pi_{i,j}^{(\ell)}} P_{G,\pi}(Y)$, and we define $M_{ij}$ as a median of the set $\ac{R^{(1)}_{ij},\ldots,R^{(\Lambda)}_{ij}}$. We estimate  $x_{ij}={\bf 1}_{z_{i}=z_{j}}-{1\over K}$ by 
\begin{equation}\label{eq:hatx:blow-up}
\hat x_{ij}=\mathbf{1}\ac{M_{ij}> {(N-2)!\over 2(N-\kappa\gamma -2)!} \left({\lambda^{ \gamma + a } \over K}\right)^{\kappa \gamma }}-{1\over K},\quad \text{where}\quad N={n-2\over 24 \log(n)}+2.
\end{equation}

We can now state our main result.
\begin{theorem}\label{thm:blow-up}
There exists a numerical constant $c$ such that the following holds. 
Assume that $q\leq 1/4$, $q+2\lambda \leq 1$,  that $N:=2+(n-2)/(24\log(n))$ is an integer, and $N\geq 2\kappa \gamma + 4$, and that $\kappa \gamma a$ is an an even integer.  Provided that 
\begin{align*}
\left(\frac{\lambda^2}{2\bar q}  \right)^{\gamma + a} \geq c\frac{K^2(\kappa \gamma)^5 \log(n)}{n}\ ; \quad \quad    \lambda^{\gamma + a}\geq 
 c \frac{K(\kappa \gamma)^5\log(n) }{n}\enspace  , 
 \end{align*}
we have for $\hat x_{ij}$ defined by \eqref{eq:hatx:blow-up} that $\bbP\pa{\hat x_{ij}=x_{ij}}\geq 1-n^{-3}$. 
\end{theorem}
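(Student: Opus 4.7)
The proof is a median-of-means amplification built on Proposition~\ref{prop:mean:variance:blow_up}, applied block by block. First I would observe that each sub-count $R_{ij}^{(\ell)}$ is exactly the statistic $R_{ij}$ of Proposition~\ref{prop:mean:variance:blow_up}, only restricted to the vertex set $\{i,j\}\cup J^{(\ell)}$ of cardinality $N$. Since $N\geq n/(48\log n)$, choosing the constant $c$ in the theorem large enough (e.g.\ $c \geq 10^4$) ensures that the theorem's hypotheses imply the two signal conditions~\eqref{eq:cond1:blow_up} with $n$ replaced by $N$ and with $\rho$ fixed to some large constant, say $\rho=100$. Writing
\[
\mu := \frac{(N-2)!}{(N-\kappa\gamma-2)!}\pa{\frac{\lambda^{\gamma+a}}{K}}^{\kappa\gamma},
\]
the proposition then yields, for each $\ell\in[\Lambda]$, that $\bbE_{ij}[R_{ij}^{(\ell)}]=\mu$, $\bbE_{\not ij}[R_{ij}^{(\ell)}]=0$, and $\mathrm{var}_{ij}(R_{ij}^{(\ell)})\vee \mathrm{var}_{\not ij}(R_{ij}^{(\ell)})\leq 4\mu^2/\rho$. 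Note that the threshold defining $\hat x_{ij}$ in~\eqref{eq:hatx:blow-up} is precisely $\mu/2$.

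By Chebyshev's inequality applied conditionally on $(z_i,z_j)$, the probability that $R_{ij}^{(\ell)}$ sits on the wrong side of $\mu/2$ (i.e.\ $R_{ij}^{(\ell)}\leq \mu/2$ when $z_i=z_j$, or $R_{ij}^{(\ell)}>\mu/2$ when $z_i\neq z_j$) is at most $16/\rho\leq 1/6$. The crucial structural point is that, conditionally on $(z_i,z_j)$, the $\Lambda$ counts $R_{ij}^{(1)},\ldots,R_{ij}^{(\Lambda)}$ are \emph{independent}: each $R_{ij}^{(\ell)}$ is a deterministic function of the labels $(z_v)_{v\in J^{(\ell)}}$ and of the edges $Y_{uv}$ with $\{u,v\}\subset\{i,j\}\cup J^{(\ell)}$; since the $J^{(\ell)}$ are disjoint, these collections of underlying random variables are disjoint across $\ell$, hence mutually independent given $z_i,z_j$. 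Letting $B_\ell\in\{0,1\}$ denote the event that block $\ell$ is on the wrong side of $\mu/2$, the sum $\sum_\ell B_\ell$ is stochastically dominated by a $\mathrm{Bin}(\Lambda,1/6)$ variable, so Hoeffding's inequality gives
\[
\bbP\pa{\sum_{\ell=1}^{\Lambda}B_\ell \geq \Lambda/2}\leq \exp\pa{-2\Lambda\pa{1/2-1/6}^2}=\exp(-2\Lambda/9)=n^{-48/9}\leq n^{-3}.
\]
Since $\hat x_{ij}\neq x_{ij}$ requires the median $M_{ij}$ to be on the wrong side of $\mu/2$, which in turn requires at least $\Lambda/2$ blocks to be wrong, this yields the claim.

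The only substantive bookkeeping is verifying that the numerical constant $c$ in the hypotheses can be chosen to absorb both the substitution $n\to N$ (which costs a factor $\asymp \log n$, already paid for by the $\log(n)$ on the right-hand side of the theorem's signal conditions) and the constant $\rho$ required to make Chebyshev return a wrong-answer probability below $1/6$. All the difficult analysis lives in Proposition~\ref{prop:mean:variance:blow_up} (mean and variance of the blow-up motif count) and Proposition~\ref{prop:key:blow-up} (the combinatorial inequality $|E^{\neq}|\geq I(\gamma+a)$ that makes the variance bound viable); once those are in hand, the median-of-means step above is standard and presents no real obstacle.
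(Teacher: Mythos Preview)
Your proposal is correct and follows exactly the approach of the paper, which simply says the proof ``follows exactly the same lines as that of Theorem~2.5 in~\cite{carpentier2025phase}'' using Proposition~\ref{prop:mean:variance:blow_up} in place of the clique-count proposition. You have accurately reconstructed the median-of-means amplification: apply Proposition~\ref{prop:mean:variance:blow_up} on each block of size $N$ (absorbing the $n\to N$ substitution into $c$), use Chebyshev to get a per-block error probability below $1/6$, invoke conditional independence of the blocks given $(z_i,z_j)$, and apply Hoeffding with $\Lambda=24\log n$ to conclude.
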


\begin{proof}[Proof of Theorem~\ref{thm:blow-up}]
The proof follows exactly the same lines as that of Theorem~
2.5 in~\cite{carpentier2025phase}
to the difference that we build upon Proposition~\ref{prop:mean:variance:blow_up} instead of Proposition
2.4 in~\cite{carpentier2025phase}. We skip the details.
\end{proof}

\begin{corollary}\label{cor:blow-up}
Let  $\bar q=n^{-1/r}$ for some  $r= \gamma + \alpha /\beta $ where  $\gamma$, $\alpha $, and $\beta$ are positive integers with $\alpha<\beta$. Consider the blow-up graph with fasteners $G_{\kappa,\gamma,a }$ with $\kappa = 2\beta\gamma$  and $a=\alpha /\beta $. When $n\geq c_0 \beta\gamma^2  \log(n)$, $\lambda \leq 1-2q$, and 
\begin{equation}\label{eq:blow-up:final}
\lambda \geq w'_{r} \log^{1/r}(n) \pa{{\lambda \over K}+\bar q}^{1-\log_{n}(K)},
\end{equation}
with $w'_{r}$ depending only on $r$, then the  estimator $\hat{x}$ based on the number of blow-up motifs recovers the communities with probability at least $1-1/n$. 
\end{corollary}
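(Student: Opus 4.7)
The plan is to apply Theorem~\ref{thm:blow-up} pair-by-pair with the motif $G_{\kappa,\gamma,a}$ specified in the corollary, and then close by a union bound. The whole argument reduces to verifying, under the corollary's hypotheses and the choices $\kappa=2\beta\gamma$ and $a=\alpha/\beta$, that every precondition of Theorem~\ref{thm:blow-up} is met; the real content is a one-line algebraic identity that matches~\eqref{eq:blow-up:final} to the theorem's signal condition.

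The combinatorial and boundary assumptions are dispatched quickly. By construction $a\kappa\gamma = 2\alpha\gamma^2$ is an even integer. Since $\alpha<\beta$ forces $\beta\ge 2$, we have $\kappa = 2\beta\gamma\ge 4\ge 3$, and $\kappa\ge 2/a = 2\beta/\alpha$ is just $\alpha\gamma\ge 1$. The requirement $N\ge 2\kappa\gamma+4 = 4\beta\gamma^2+4$ is absorbed into $n\ge c_0\beta\gamma^2\log(n)$ by taking $c_0$ large enough, while $q\le 1/4$ and $q+2\lambda\le 1$ follow, for $n$ large, from $\bar q = n^{-1/r}\to 0$ together with the corollary's assumption $\lambda\le 1-2q$.

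The crux is the signal condition. Using $\gamma+a=r$ and $\bar q^{\,r}\asymp n^{-1}$, both inequalities in~\eqref{eq:cond1:blow_up} collapse, up to factors depending only on the chosen representation of $r$ (since $\kappa\gamma = 2\beta\gamma^2$ is a function of $r,\alpha,\beta$), to the single requirement
$$
\lambda^{r}\;\gtrsim_{\log}\;\frac{K}{n}\,.
$$
Writing $\ell = \log_{n}(K)$, assumption~\eqref{eq:blow-up:final} yields $\lambda\ge w'_r\,\log^{1/r}(n)\,\bar q^{\,1-\ell}$ (since $\lambda/K+\bar q\ge\bar q$ and $1-\ell\ge 0$), and the algebraic identity
$$
\bar q^{\,1-\ell}\;=\;n^{-(1-\ell)/r}\;=\;(K/n)^{1/r},
$$
which uses precisely $\bar q = n^{-1/r}$ and $K = n^{\ell}$, gives $\lambda^{r}\gtrsim_{\log} K/n$ after raising to the $r$-th power. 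This identity is the only non-routine step, and it encodes exactly why~\eqref{eq:new_intro} is the right threshold for blow-up motifs at density $\bar q\asymp n^{-1/r}$.

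Theorem~\ref{thm:blow-up} then gives $\bbP(\hat x_{ij}=x_{ij})\ge 1-n^{-3}$ for each of the fewer than $n^2$ pairs $i<j$. A union bound yields $\hat x = x$ globally with probability at least $1-1/n$, and the connected components of $\hat x$ coincide with $\mathcal{C}^*$, completing the proof. I do not anticipate any genuine obstacle; if one shows up, it will be in keeping track of the polylogarithmic factors when feeding~\eqref{eq:blow-up:final} into the two distinct inequalities of~\eqref{eq:cond1:blow_up}, which is why the constant $w'_r$ must be allowed to absorb $(\kappa\gamma)^5\log(n)$-type terms.
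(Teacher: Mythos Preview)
Your proposal is correct and follows exactly the route the paper intends: Corollary~\ref{cor:blow-up} is stated without a separate proof, and its content is precisely the routine verification that the hypotheses of Theorem~\ref{thm:blow-up} hold under~\eqref{eq:blow-up:final} with $\kappa=2\beta\gamma$, $a=\alpha/\beta$, together with the union bound over pairs; your reduction of both conditions in~\eqref{eq:cond1:blow_up} to $\lambda^r\gtrsim_{\log} K/n$ via $\bar q^{\,r}=1/n$ and $\bar q^{\,1-\log_n K}=(K/n)^{1/r}$ is exactly the intended algebra. One small slip: the assumption $\lambda\le 1-2q$ does not by itself yield $q+2\lambda\le 1$ (the latter is $\lambda\le(1-q)/2$), so this appears to be an inconsistency in the corollary's stated hypothesis rather than a defect in your reasoning.
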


If $q=n^{-1/r}$, with $r$ a rational number, recovering the communities above the threshold~\eqref{eq:new_intro} is feasible with a polynomial of fixed degree, only depending on $r$. 
When $r>1$ is not a rational number, we can still count blow-up motif for some rational number $\bar{r}$ close to $r$. Indeed, consider any $\epsilon < 1$. There exists an integer $\overline{\beta}\leq 2/[\epsilon r^2]\vee 1$ and a rational number $\overline{r}= \overline{\gamma}+ \overline{\alpha}/\overline{\beta}$ such that $|\overline{r}-r|\leq \epsilon r^2/2 $. Here, we have $\bar \gamma = \lfloor r \rfloor$. Choosing $\overline{\kappa}= 2\overline{\beta}\overline{\gamma}$, we 
deduce from Theorem~\ref{thm:blow-up}, that  the  estimator $\hat{x}$ based on counting occurrence of blow-up motifs $G_{\bar \kappa,\bar \gamma, \bar \alpha/\bar \beta}$ recovers the communities with probability at least $1-1/n$ as long as 
\[
\lambda \geq w''_{r} \epsilon^{-5} \log^{1/r}(n) \pa{{\lambda \over K}+\bar q}^{1-\log_{n}(K)} n^{\epsilon}\ . 
\]
The corresponding polynomial has a degree of the order $\epsilon r$. In particular, if we take $\epsilon=\log(\log(n))/\log(n)$, we establish that, for any irrational $r$, it is possible to recover the communities above the threshold~\eqref{eq:new_intro} with a polynomial of degree $O(\log(n)/\log(\log(n)))$. Whether this polynomial can be computed (or well-approximated) in polynomial-time remains an open question.

\section{Proof of Proposition~\ref{prop:key:blow-up}}\label{sec:key:blow-up}

In this proof, we need additional notation. Let $V_{\mathrm{cyc},0}$ denote the set of cycle nodes  having the same community as $v_1$ and $v_2$. For $k=1,\dots,I$ let $V_{\mathrm{cyc},k}$ be the set of cycle nodes of community $k$. For short, we write 
$V_{\mathrm{cyc},\neq 0}=\cup_{k=1}^IV_{\mathrm{cyc},k}$. 

We consider separately cycle edges and fastener edges. For $k=0,\ldots, I$, we write $E^{\uparrow}_{\mathrm{cyc},k}\subset E_{\mathrm{cyc}}$ for the collection of cycle edges that are incident to a node in $V_{\mathrm{cyc},k}$ and a node from a different community --- these edges will be called \textit{boundary} edges. For $k=0,\ldots, I$, we define $V_{\mathrm{fst},k} := V_{\mathrm{cyc},k} \cap V_{\mathrm{fst}}$, the subset of fastener nodes within community $k$. Note that $|V_{\mathrm{fst},k}|$ also corresponds to the number of fastener edges within community $k$. For short, we also write $V_{\mathrm{fst},\neq 0}=\cup_{k=1}^I V_{\mathrm{fst},k}$.

\bigskip

First, we consider the specific  and trivial case where $I=1$ and $V_{\mathrm{cyc},1}=V_{\mathrm{cyc}}$. Then, we obviously have 
\[
|E^{\neq}|= |V_{\mathrm{fst}}|=a\kappa\gamma \geq \gamma+1 \geq (\gamma+a)I\ , 
\] 
since $\kappa\geq 2/a$.  We assume henceforth $V_{\mathrm{cyc},k}\neq V_{\mathrm{cyc}}$ for any $k$.

The proof mainly divides into the two following lemmas. The first one states that the boundary edges of any community that does not contain $v_1$ or $v_2$ --- namely the sets of edges $E_{\mathrm{cyc},k}^{\uparrow}$ for $k \in \{1, \ldots, I\}$ --- is of size at least $2\gamma$.

\begin{lemma}\label{lem:Ecyc_k}
If $|V_{\mathrm{cyc},k}| < \kappa \gamma$, i.e. $V_{\mathrm{cyc},k}\neq V_{\mathrm{cyc}}$, we have 
\[
|E_{\mathrm{cyc},k}^{\uparrow}|\geq 2\gamma\enspace . 
\]
\end{lemma}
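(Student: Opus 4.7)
The plan is to analyze the boundary edges of $V_{\mathrm{cyc},k}$ layer by layer. For each $\omega\in[\kappa]$ set $n_\omega := |V_{\mathrm{cyc},k}\cap L_\omega|$, so $0\leq n_\omega \leq \gamma$. Since the edges of $G_{\mathrm{cyc}}$ between two consecutive layers $L_\omega$ and $L_{\omega+1}$ form a complete bipartite graph on $\gamma+\gamma$ vertices, the number of boundary edges incident to those two layers equals
\[
f(n_\omega,n_{\omega+1}) \;:=\; n_\omega(\gamma-n_{\omega+1}) + n_{\omega+1}(\gamma-n_\omega) \;=\; \gamma(n_\omega+n_{\omega+1}) - 2\,n_\omega n_{\omega+1},
\]
and therefore $|E^{\uparrow}_{\mathrm{cyc},k}| = \sum_{\omega=1}^{\kappa} f(n_\omega,n_{\omega+1})$ (indices mod $\kappa$). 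Any lower bound on $|E^{\uparrow}_{\mathrm{cyc},k}|$ reduces to lower-bounding this cyclic sum.

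Next, I would classify each layer as \emph{empty} ($n_\omega=0$), \emph{full} ($n_\omega=\gamma$), or \emph{mixed} ($0<n_\omega<\gamma$) and record elementary bounds on $f$: one has $f(0,\gamma)=\gamma^{2}$; if exactly one of $n_\omega,n_{\omega+1}$ is mixed then $f\geq \gamma$ (since the expression becomes $\gamma c$ or $\gamma(\gamma-c)$ for some $c\in[1,\gamma-1]$); and if both are mixed then $f(n_\omega,n_{\omega+1})\geq 2(\gamma-1)$. The last bound follows by noting that $f$ is affine in each coordinate, so its minimum over $[1,\gamma-1]^{2}$ is attained at a corner, and a direct check gives $f(1,1)=f(\gamma-1,\gamma-1)=2(\gamma-1)$ while the other two corners yield larger values. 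Only $(0,0)$ and $(\gamma,\gamma)$ give $f=0$.

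I would then conclude by a short case analysis on the cyclic sequence $n_1,\ldots,n_\kappa$. Since by assumption $V_{\mathrm{cyc},k}$ is a proper non-empty subset of $V_{\mathrm{cyc}}$, the sequence is neither identically $0$ nor identically $\gamma$, so as soon as at least one layer is empty or at least one is full, the cyclic structure forces at least \emph{two} adjacent pairs $(n_\omega,n_{\omega+1})$ of distinct types; each such pair contributes $\geq \gamma$ by the bounds above (or even $\geq \gamma^{2}$ for an empty/full adjacency), delivering $|E^{\uparrow}_{\mathrm{cyc},k}|\geq 2\gamma$. The main obstacle is the remaining configuration in which \emph{every} layer is mixed: here one can no longer rely on a couple of strong transitions but must use all $\kappa$ terms of the sum. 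In this regime each $f(n_\omega,n_{\omega+1})\geq 2(\gamma-1)$, so the total is at least $2\kappa(\gamma-1)\geq 2\gamma$ provided $\kappa\geq \gamma/(\gamma-1)$, which is guaranteed by the standing assumption $\kappa\geq 3$ together with $\gamma\geq 2$. Finally, for $\gamma=1$ no mixed layer exists, and the lemma reduces to the familiar statement that any proper non-empty subset of the cycle $C_\kappa$ has edge-boundary at least $2=2\gamma$.
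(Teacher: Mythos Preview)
Your proof is correct and takes a somewhat different route from the paper's. You write down the exact layer-to-layer boundary count $f(n_\omega,n_{\omega+1})=\gamma(n_\omega+n_{\omega+1})-2n_\omega n_{\omega+1}$, classify layers as empty/full/mixed, and then observe that any cyclic sequence that is not constant in the empty/full sense must have at least two ``transition'' pairs, each worth $\ge\gamma$, while the purely mixed configuration is dispatched by the bilinear lower bound $f\ge 2(\gamma-1)$ summed over all $\kappa$ pairs. The paper instead splits first on whether some layer is disjoint from $V_{\mathrm{cyc},k}$: if so, it exhibits two layers at which $V_{\mathrm{cyc},k}$ ``enters'' and ``exits'' and picks $\gamma$ edges at each; if every layer meets $V_{\mathrm{cyc},k}$, a degree argument gives $|E^{\uparrow}_{\mathrm{cyc},k}|\ge 2|V_{\mathrm{cyc}}\setminus V_{\mathrm{cyc},k}|$, which suffices unless $|V_{\mathrm{cyc},k}|>(\kappa-1)\gamma$, a residual range handled by passing to the complement $V_{\mathrm{cyc}}\setminus V_{\mathrm{cyc},k}$. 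Your approach has the virtue of giving an exact formula and avoiding the complement trick; the paper's degree argument avoids the bilinear optimization over $[1,\gamma-1]^2$. Both are short and self-contained.
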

The second lemma lower bounds the sum of twice the number of fasteners $V_{\mathrm{fst},\neq 0}$ that do not belong to the same community as $v_1$ or $v_2$, plus the number of boundary edges for the community of $v_1,v_2$, namely $E_{\mathrm{cyc},0}^{\uparrow}$. If the nodes $V_{\mathrm{cyc},\neq 0}$ were sampled uniformly, then $|V_{\mathrm{fst},\neq 0}|$ would be of the order of $a|V_{\mathrm{cyc},\neq 0}|$. However, the number of fasteners in $V_{\mathrm{fst},\neq 0}$ can be much smaller for unfavorable configurations such that fastener nodes are in the same community as $v_1$ or $v_2$. Nevertheless, if this happens, then the number of boundary edges in $E_{\mathrm{cyc},0}^{\uparrow}$ must be high accordingly to compensate this drop in the number of fasteners in $V_{\mathrm{fst},\neq 0}$. 
\begin{lemma}\label{lem:eq:out_0}
    We have 
 \begin{equation}\label{eq:objective_fastener}
|E_{\mathrm{cyc},0}^{\uparrow}|+ 2 |V_{\mathrm{fst},\neq 0}| \geq 2 a |V_{\mathrm{cyc},\neq 0}| \geq 2a I\enspace . 
\end{equation}
\end{lemma}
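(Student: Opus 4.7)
The second inequality $2a|V_{\mathrm{cyc},\neq 0}|\geq 2aI$ is immediate: each of the $I$ non-zero communities is nonempty, so $|V_{\mathrm{cyc},\neq 0}|\geq I$. The main inequality $|E_{\mathrm{cyc},0}^\uparrow|+2|V_{\mathrm{fst},\neq 0}|\geq 2a|V_{\mathrm{cyc},\neq 0}|$ is equivalent, via $|V_{\mathrm{fst},\neq 0}|=a\kappa\gamma-|V_{\mathrm{fst},0}|$ and $|V_{\mathrm{cyc},\neq 0}|=\kappa\gamma-|V_{\mathrm{cyc},0}|$, to
\[
|V_{\mathrm{fst},0}|-a|V_{\mathrm{cyc},0}|\leq \tfrac{1}{2}|E_{\mathrm{cyc},0}^\uparrow|,
\]
which reads as: community $0$'s surplus of fasteners above the fair share $a|V_{\mathrm{cyc},0}|$ is paid for by half of its cycle-boundary. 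I will prove both a matching upper bound on the LHS and a matching lower bound on the RHS, working layer-by-layer on $C_\kappa$.

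Introduce $a_\omega:=|V_{\mathrm{cyc},0}\cap L_\omega|$, $b_\omega:=\gamma-a_\omega$, and $f_\omega:=|V_{\mathrm{fst},0}\cap L_\omega|$, noting that $f_\omega\leq \min(a_\omega,s_\omega)$ with $s_\omega\in\{\lfloor a\gamma\rfloor,\lceil a\gamma\rceil\}$. Partition the $\kappa$ layers into $F:=\{\omega:a_\omega=\gamma\}$ (\emph{full}) and $F^c$ (\emph{partial/empty}); the two degenerate cases $F=[\kappa]$ (both sides vanish) and $F=\emptyset$ (every $b_\omega\geq 1$ forces $|E_{\mathrm{cyc},0}^\uparrow|\geq 2|V_{\mathrm{cyc},0}|$) are handled directly. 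In the generic regime let $N_F^\ast\geq 1$ be the number of maximal arcs of $F$ on $C_\kappa$, so $F^c$ also consists of $N_F^\ast$ alternating arcs. For the LHS I use $f_\omega=s_\omega$ when $\omega\in F$ and $f_\omega\leq a_\omega$ when $\omega\in F^c$; the key arithmetic fact is that along any arc $A$ of consecutive layers, $\sum_{\omega\in A}s_\omega$ is a telescoping difference of the cumulative counts $\lfloor a\omega\gamma\rfloor$, hence at most $a|A|\gamma+1$ (the integrality $a\kappa\gamma\in\mathbb{N}$ makes the bound uniform also for arcs that wrap around the cycle). Summing over the arcs of $F$ and rearranging yields
\[
|V_{\mathrm{fst},0}|-a|V_{\mathrm{cyc},0}|\leq N_F^\ast+(1-a)S_P,\qquad S_P:=\sum_{\omega\in F^c}a_\omega.
\]

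For the RHS I decompose $|E_{\mathrm{cyc},0}^\uparrow|$ interface-by-interface. Each of the $2N_F^\ast$ \emph{exterior} $F$-to-$F^c$ interfaces contributes $\gamma b_{\omega'}\geq \gamma$ boundary edges (full side has $a=\gamma$, partial side has $b\geq 1$), while each \emph{interior} interface inside an $F^c$ arc contributes $a_\omega b_{\omega+1}+a_{\omega+1}b_\omega\geq a_\omega+a_{\omega+1}$ since $b\geq 1$ throughout $F^c$. Telescoping the interior along an $F^c$ arc of length $m_j$ gives $\geq 2S_j-a_1^j-a_{m_j}^j$, and combining exterior with interior per $F^c$ arc while using $a_1^j,a_{m_j}^j\leq \gamma-1$ collapses to a per-arc lower bound of $\geq 2+2S_j$; hence
\[
\tfrac{1}{2}|E_{\mathrm{cyc},0}^\uparrow|\geq N_F^\ast+S_P\geq N_F^\ast+(1-a)S_P,
\]
which matches the LHS bound and closes the inequality. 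The main subtlety, and the step I expect to be the principal obstacle, is the bookkeeping of the off-by-one ceilings in the cumulative fastener count: the $+1$ per $F$-arc on the LHS must be exactly absorbed by the $\gamma^2-(\gamma^2-1)=1$ unit of slack each $F^c$ arc supplies once its endpoint values $a_1^j,a_{m_j}^j$ have been spent against the deficit in the interior telescoping, which only closes because the fastener spread $s_\omega$ is confined to two consecutive integers.
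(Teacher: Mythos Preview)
Your argument is correct. The reformulation
\[
|V_{\mathrm{fst},0}|-a|V_{\mathrm{cyc},0}|\le \tfrac12|E_{\mathrm{cyc},0}^{\uparrow}|
\]
is equivalent to the target, and your two bounds close: on the one hand the telescoping identity $\sum_{\omega\le m}s_\omega=\lfloor a m\gamma\rfloor$ (together with $a\kappa\gamma\in\mathbb{Z}$ for wrapping arcs) gives $\sum_{\omega\in F}s_\omega\le a\gamma|F|+N_F^\ast$, hence $|V_{\mathrm{fst},0}|-a|V_{\mathrm{cyc},0}|\le N_F^\ast+(1-a)S_P$; on the other hand, writing $|E_{\mathrm{cyc},0}^{\uparrow}|=\sum_\omega(a_\omega b_{\omega+1}+a_{\omega+1}b_\omega)$ and summing the per-$F^c$-arc estimate $\gamma b_1+\gamma b_m+\sum_{\text{interior}}(a_i+a_{i+1})\ge 2+2S_j$ (the endpoint inequality $\gamma(\gamma-a)-a\ge 1$ for $a\le\gamma-1$ is exactly what is needed, also when the arc has length one) yields $\tfrac12|E_{\mathrm{cyc},0}^{\uparrow}|\ge N_F^\ast+S_P$. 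The degenerate cases $F=[\kappa]$ and $F=\emptyset$ are handled as you say. One cosmetic remark: the closing comment that the argument ``only closes because $s_\omega$ is confined to two consecutive integers'' is not actually used; you only need the cumulative identity, not the pointwise range of $s_\omega$.

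Your decomposition is genuinely different from the paper's. The paper partitions layers by the set $\mathcal L_0=\{\omega:a_\omega\ge 1\}$ (layers that \emph{intersect} community~$0$), bounds $|E_{\mathrm{cyc},0}^{\uparrow}|$ from below by $2\sum_{\omega\in\mathcal L_0}b_\omega+2\,\#\mathrm{CC}(\mathcal L_0)$, and bounds $|V_{\mathrm{fst},\neq 0}|$ from below via the fasteners in the arcs of $[\kappa]\setminus\mathcal L_0$, picking up a $-\#\mathrm{CC}(\mathcal L_0)$ which cancels. You work instead with $F=\{\omega:a_\omega=\gamma\}$ (layers \emph{fully} in community~$0$), rewrite the inequality as a surplus bound, and let $N_F^\ast$ play the role that $\#\mathrm{CC}(\mathcal L_0)$ plays in the paper. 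The two arguments are dual: the paper throws away fasteners in partially-$0$ layers when lower-bounding $|V_{\mathrm{fst},\neq 0}|$, while you throw them away when upper-bounding $|V_{\mathrm{fst},0}|$ (via $f_\omega\le a_\omega$). Your route makes the ``surplus versus boundary'' trade-off more explicit and gives a slightly sharper intermediate bound $\tfrac12|E_{\mathrm{cyc},0}^{\uparrow}|\ge N_F^\ast+S_P$ (the paper would only need $N_F^\ast+aS_P$ at this stage); the paper's route is a bit shorter because it avoids the endpoint algebra $\gamma^2-(\gamma+1)(\gamma-1)=1$.
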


Let us explain how Proposition~\ref{prop:key:blow-up} follows from Lemmas~\ref{prop:key:blow-up} and~\ref{lem:Ecyc_k}.
First, we observe that  an edge between two communities is either a fastener edge and is therefore incident to $V_{\mathrm{fst}, \neq 0}$ or is a cycle edge and therefore both arises in $E_{\mathrm{cyc},k}^{\uparrow}$ and $E_{\mathrm{cyc},k'}^{\uparrow}$ for $k\neq k'$ being the two communities to which the nodes of the edge belong.
Hence, we have the following decomposition
\[
2|E^{\neq}|\geq \sum_{k=0}^{I} |E_{\mathrm{cyc},k}^{\uparrow}| + 2|V_{\mathrm{fst}, \neq 0}|\ ,
\]
and  Lemmas~\ref{lem:Ecyc_k} and~\ref{lem:eq:out_0} ensure  that 
\[
2|E^{\neq}|\geq 2\gamma I + 2a I \ , 
\]
which concludes the proof of the Proposition~\ref{prop:key:blow-up}.

\begin{proof}[Proof of Lemma~\ref{lem:Ecyc_k}]
We consider three cases, 
(i) at least a layer does not intersect $V_{\mathrm{cyc},k}$; (ii)  each layer intersects  $V_{\mathrm{cyc},k}$ and $|V_{\mathrm{cyc},k}| \leq (\kappa-1) \gamma$; (iii)   each layer intersects  $V_{\mathrm{cyc},k}$ and $|V_{\mathrm{cyc},k}| > (\kappa-1) \gamma$. 

\medskip

\noindent
\underline{\bf Case (i): at least a layer does not intersect $V_{\mathrm{cyc},k}$.} In this case there exist two layers $i_1$ and $i_2$ (not necessarily distinct) such that 
\[
L_{i_1}\cap V_{\mathrm{cyc},k}\neq \emptyset;\quad  L_{i_1-1}\cap V_{\mathrm{cyc},k}= \emptyset;\quad  L_{i_2}\cap V_{\mathrm{cyc},k}\neq \emptyset;\quad  L_{i_2+1}\cap V_{\mathrm{cyc},k}= \emptyset ,
\]
with the convention that $L_0=L_{\kappa}$ and $L_{\kappa+1}= L_1$. 
There are $\gamma$ edges between $L_{i_1-1}$ and one node of $V_{\mathrm{cyc},k}$ in $L_{i_1}$. Similarly, there are $\gamma$ edges between $L_{i_2+1}$ and one node of $V_{\mathrm{cyc},k}$ in $L_{i_2}$. Since $\kappa \geq 3$, these edges are distinct and we deduce $|E_{\mathrm{cyc},k}^{\uparrow}|\geq 2\gamma$. 

\medskip

\noindent
\underline{\bf Case (ii): each layer intersects  $V_{\mathrm{cyc},k}$, with $|V_{\mathrm{cyc},k}| \leq (\kappa-1) \gamma$.} Since in this case each layer intersects $V_{\mathrm{cyc},k}$, it follows from the definition of the graph that each node $v$ in $V_{\mathrm{cyc}}\setminus V_{\mathrm{cyc},k}$ is connected to $V_{\mathrm{cyc},k}$ by at least two edges corresponding to nodes of $V_{\mathrm{cyc},k}$ in the previous layer and in the following layer of that of $v$. As a consequence, 
\[
|E_{\mathrm{cyc},k}^{\uparrow}| \geq 2|V_{\mathrm{cyc}}\setminus V_{\mathrm{cyc},k}|= 2[\kappa \gamma- |V_{\mathrm{cyc},k}|]\geq 2\gamma\ , 
\]
as soon as $|V_{\mathrm{cyc},k}| \leq (\kappa-1) \gamma$.

\medskip

\noindent
\underline{\bf Case (iii): each layer intersects  $V_{\mathrm{cyc},k}$, with $|V_{\mathrm{cyc},k}| > (\kappa-1) \gamma$.}
It remains to consider the case where $|V_{\mathrm{cyc},k}|\in [(\kappa-1)\gamma+1, \kappa\gamma-1]$ which is non empty only if $\gamma>1$. Then, we consider the complementary set $V'= V_{\mathrm{cyc}}\setminus V_{\mathrm{cyc},k}$ which satisfies $|V'|\leq\gamma-1\leq \kappa(\gamma-1)$. Since $V_{\mathrm{cyc},k}$ and $V'$ have a symmetric role for counting $|E_{\mathrm{cyc},k}^{\uparrow}|$, 
 we can apply the same arguments as in Case~(i) and~(ii), switching the role of $V_{\mathrm{cyc},k}$ and $V'$. Accordingly,  $|E_{\mathrm{cyc},k}^{\uparrow}|\geq 2\gamma$, and the proof of Lemma~\ref{lem:Ecyc_k} is complete.

\end{proof}

\begin{proof}[Proof of Lemma~\ref{lem:eq:out_0}]

Write $\mathcal{L}_0= \{\omega\in [\kappa]: L_{\omega}\cap V_{\mathrm{cyc},0}\neq \emptyset\}$ the collection of layers that intersect $V_{\mathrm{cyc},0}$. To start with, we consider three cases (a) $\mathcal{L}_0=\emptyset$, (b) $\mathcal{L}_0=[\kappa]$, and (c) $|\mathcal{L}_0|\in [2;\kappa-1]$.

\noindent
\underline{\bf Case (a): $\mathcal{L}_0=\emptyset$.} In this case no fastener node belongs to the community of $v_1, v_2$, therefore we have 
\[
|V_{\mathrm{fst},\neq 0}|=  |V_{\mathrm{fst}}|= a |V_{\mathrm{cyc}}|\geq a  |V_{\mathrm{cyc},\neq 0}| \ , 
\]
and~\eqref{eq:objective_fastener} holds.

\medskip 

\noindent
\underline{\bf Case (b): $\mathcal{L}_0=[\kappa]$.} In this case we observe as in the proof of Lemma~\ref{lem:Ecyc_k} (case (ii)), that any node $v$ in $V_{\mathrm{cyc}}\setminus V_{\mathrm{cyc},0}$ is connected to two nodes in $V_{\mathrm{cyc},0}$, one in the previous layer and one in the following layer. Thus, we get 
\[
|E_{\mathrm{cyc},0}^{\uparrow}|\geq 2 \sum_{k=1}^I |V_{\mathrm{cyc},k}| =  2|V_{\mathrm{cyc},\neq 0}| \geq 2a|V_{\mathrm{cyc},\neq 0}|\ , 
\]
and~\eqref{eq:objective_fastener} holds.

\medskip 

\noindent
\underline{\bf Case (c): $|\mathcal{L}_0|\in [2;\kappa-1]$.} This is the intermediary case between cases (a) and (b). Consider $[\kappa]$ as the nodes of a cycle graph and $\mathcal{L}_0$ as a subset of nodes of this graph. Write $\#\mathrm{CC}(\mathcal L_0)\geq 1$ for the number of connected components of the subgraph induced by $\mathcal{L}_0$. 
\smallskip

\noindent
\underline{Step 1: lower bound of $|E_{\mathrm{cyc},0}^{\uparrow}|$.} We partition $\mathcal{L}_0$ into $\mathcal{L}_{0,\mathrm{in}}$, $\mathcal{L}_{0,\mathrm{bd}}$, and $\mathcal{L}_{0,\mathrm{sg}}$ where $\mathcal{L}_{0,\mathrm{in}}$ are internal layers of $\mathcal{L}_0$ that is, they are not at the boundary of a connected component of $\mathcal{L}_0$, $\mathcal{L}_{0,\mathrm{sg}}$ correspond to connected components that are singletons, and $\mathcal{L}_{0,\mathrm{bd}}$ stands for all the the remaining boundary layers. 

Fix any $\omega\in \mathcal{L}_{0,\mathrm{in}}$ and any node $v\in L_\omega\setminus V_{\mathrm{cyc},0}$. This node $v$ is connected by at least two edges to $V_{\mathrm{cyc},0}$ (at least one node in the previous layer and one node in the following layer). Fix any $\omega\in\mathcal{L}_{0,\mathrm{bd}}$. For any node $v\in L_\omega\setminus V_{\mathrm{cyc},0}$, we also observe that $v$ is connected by at least one edge to $V_{\mathrm{cyc},0}$. Also, for any $v\in L_{\omega}\cap V_{\mathrm{cyc},0}$, there are $\gamma$ edges between $v$ and nodes in a layer that does not belong to $\mathcal{L}_0$. Finally, we consider any $\omega\in \mathcal{L}_{0,\mathrm{sg}}$  and any $v\in L_{\omega}\cap V_{\mathrm{cyc},0}$. This node is connected to $2\gamma$ nodes outside the layers in $\mathcal{L}_0$. Gathering these bounds, we get
\begin{align}\nonumber
|E_{\mathrm{cyc},0}^{\uparrow}|&\geq 2 \sum_{\omega\in\mathcal{L}_{0,\mathrm{in}}}|L_{\omega}\setminus V_{\mathrm{cyc},0}|+ \sum_{\omega\in\mathcal{L}_{0,\mathrm{bd}}}[|L_{\omega}\setminus V_{\mathrm{cyc},0}| + \gamma]  +  2\sum_{\omega\in\mathcal{L}_{0,\mathrm{sg}}}\gamma  \\
& \geq 2 \sum_{\omega\in\mathcal{L}_{0}}|L_{\omega}\setminus V_{\mathrm{cyc},0}| + 2\#\mathrm{CC}(\mathcal L_0)\  , \label{eq:lower:E:cyc,0,up}
\end{align}
where we used that, for any $\omega\in \mathcal{L}_{0}$, $|L_{\omega}\setminus V_{\mathrm{cyc},0}|\leq \gamma - 1$, and where we recall that $\#\mathrm{CC}(\mathcal L_0)$ is the number of connected components of $\mathcal{L}_0$. 

\smallskip

\noindent
\underline{Step 2: lower bound of $|V_{\mathrm{fst},\neq 0}|$.} We now focus on the set $[\kappa]\setminus \mathcal{L}_0$ of layers that do not intersect $V_{\mathrm{cyc},0}$. We start from 
\[
|V_{\mathrm{fst},\neq 0}|\geq \sum_{\omega\in [\kappa]\setminus \mathcal{L}_0} |V_{\mathrm{fst}}\cap L_{\omega}|.
\]
Although $|V_{\mathrm{fst}}\cap L_{\omega}|\in [\lfloor a\gamma\rfloor, \lceil a\gamma\rceil]$ for each $\omega$, we have to be slightly more careful to control $|V_{\mathrm{fst},\neq 0}|$. The set $[\kappa]\setminus \mathcal{L}_0$ decomposes into $\#\mathrm{CC}(\mathcal L_0)$ connected components, since the set $\mathcal L_0$ decomposes in $\#\mathrm{CC}(\mathcal L_0)$ connected components. We write $r_1,\ldots, r_{\#\mathrm{CC}(\mathcal L_0)}$ for their respective number of layers. Into the $i$-th connected components, there are at least $\lfloor a r_i\gamma\rfloor$ fasteners. From this and from $\sum_i r_i \geq \kappa - |\mathcal L_0|$, we deduce that 
\begin{align}\nonumber
|V_{\mathrm{fst},\neq 0}|&\geq \sum_{i=1}^{\#\mathrm{CC}(\mathcal L_0)} \lfloor a r_i \gamma\rfloor \geq a \left[\kappa - |\mathcal{L}_0|\right]\gamma - \#\mathrm{CC}(\mathcal L_0)\\
&= a \sum_{\omega\in [\kappa]\setminus \mathcal L_0}|L_{\omega}\setminus V_{\mathrm{cyc},0}|   - \#\mathrm{CC}(\mathcal L_0)\enspace . \label{eq:fastener_open}
\end{align}
Combining~\eqref{eq:lower:E:cyc,0,up} and \eqref{eq:fastener_open}, we obtain 
\begin{align*}
|E_{\mathrm{cyc},0}^{\uparrow}|+ 2 |V_{\mathrm{fst},\neq 0}| &\geq  2a \left[\sum_{\omega\in [\kappa]}|L_{\omega}\setminus V_{\mathrm{cyc},0}| \right] =  2a \left[\sum_{\omega\in [\kappa]}|L_{\omega}\cap V_{\mathrm{cyc},\neq 0}| \right]\\
& \geq 2a |V_{\mathrm{cyc},\neq 0}|\ ,
\end{align*}
which concludes the proof of Lemma~\ref{lem:eq:out_0}.
\end{proof}

\section{Proof of Proposition~\ref{prop:mean:variance:blow_up}}\label{sec:mean:variance:blow_up}

This proposition is a straightforward consequence of the following result. 

\begin{proposition}\label{lem:blow_up}
Assume that $q\leq 1/4$, $q+2\lambda \leq 1$, $n\geq 2\kappa\gamma +4$, and set $\bar p=\bar q+\lambda(1-2q)$. We also assume that $a\kappa \gamma$ is an even integer and $\kappa\geq 3\vee 2/a$. 
Then, for any $1\leq i< j\leq n$, we have 
\begin{align}
\bbE_{ij}\cro{R_{ij}}&={(n-2)!\over (n-\kappa\gamma-2)!} \left({\lambda^{ \gamma + a } \over K}\right)^{\kappa \gamma }, \label{eq:mean:12:blow-up}\\
\bbE_{\not ij}\cro{R_{ij}}&= 0, \label{eq:mean:not12:blow-up}\\
 \mathrm{var}_{\not ij}(R_{ij})\vee \mathrm{var}_{ij}(R_{ij}) & \leq \bbE^2_{12}[R_{12}](\kappa\gamma)^2 \Bigg[\frac{2(\kappa \gamma)^{3} K^2 } {n}  \left(\frac{\bar{q}}{\lambda^2}\vee \frac{\bar{q}}{\bar p } \right)^{\gamma + a}  + \frac{2(\kappa \gamma)^{3} K } {n} \left(\frac{\bar{p}}{\lambda^2}\vee 1 \right)^{\gamma + a} \nonumber  \\& \quad\quad\quad \quad \quad \, \quad   + \left[\frac{2(\kappa \gamma)^{3} K^2 } {n}   \left(\frac{\bar{q}}{\lambda^2}\vee \frac{\bar{q}}{\bar p } \right)^{\gamma + a}\right]^{\kappa \gamma}  + \left[\frac{2(\kappa \gamma)^{3} K } {n} \left(\frac{\bar{p}}{\lambda^2}\vee 1 \right)^{\gamma + a}\right]^{\kappa \gamma}   \Bigg] \ . 
\label{eq:var:12:blow-up}
\end{align}
\end{proposition}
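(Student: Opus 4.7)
The plan is to reduce everything to the conditional structure $\bbE[Y_{uv}\mid z]=\lambda\mathbf{1}\ac{z_u=z_v}$ together with the conditional edge-independence of $Y$ given the community labels $z$.

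For the two mean identities, I condition on $z$ and factorise over the edges of $\pi(G)$ to obtain
\[
\bbE[P_{G,\pi}(Y)\mid z]=\lambda^{|E|}\mathbf{1}\ac{z\text{ is constant on }\pi(V)},
\]
where I use that $G$ is connected. Under $\bbP_{\not ij}$, the indicator vanishes because $v_1,v_2\in V$, giving~\eqref{eq:mean:not12:blow-up}. Under $\bbP_{ij}$, each of the $\kappa\gamma$ remaining vertices of $\pi(V)$ must match $z_i$, contributing a factor $K^{-\kappa\gamma}$; summing over the $(n-2)!/(n-\kappa\gamma-2)!$ injections in $\Pi_{ij}$ then delivers~\eqref{eq:mean:12:blow-up}.

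For the variance, I begin from
\[
\mathrm{var}_\star(R_{ij})=\sum_{\pi,\pi'\in\Pi_{ij}}\mathrm{cov}_\star\pa{P_{G,\pi},P_{G,\pi'}},\qquad \star\in\ac{ij,\not ij},
\]
and group pairs $(\pi,\pi')$ by a \emph{merge type}: the isomorphism class of the pair of labelled graphs on $\pi(V)\cup\pi'(V)$ together with the overlap of their edges. When $\pi(V)\cap\pi'(V)=\ac{i,j}$, the indicators $\mathbf{1}\ac{z\text{ constant on }\pi(V)}$ and $\mathbf{1}\ac{z\text{ constant on }\pi'(V)}$ are conditionally independent given $z_i$, and these pairs contribute zero covariance. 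For the remaining pairs, I expand $\bbE[P_{G,\pi}P_{G,\pi'}\mid z]$ as a product over $\pi(E)\cup\pi'(E)$: singly-occurring edges contribute $\lambda\mathbf{1}\ac{z_u=z_v}$, whereas shared edges contribute $\bbE[Y_{uv}^2\mid z]\in\ac{\bar q,\bar p}$.

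Let $s\in\ac{1,\ldots,\kappa\gamma}$ be the number of non-distinguished shared vertices in a given merge type. After integrating over $z$, the contribution of a fixed merge type takes the form $\lambda^{2|E|}K^{-(2\kappa\gamma-s)}$ times a product of ratios $\bar q/\lambda^2$ or $\bar p/\lambda^2$ for the shared edges, summed over admissible community configurations of $\pi(V)\cup\pi'(V)$. To handle this community sum --- most delicately under $\bbP_{\not ij}$, where $z_i\neq z_j$ forces an unfavourable partition --- I plan to invoke Proposition~\ref{prop:key:blow-up} on the merged graph $\pi(G)\cup\pi'(G)$, which still contains $v_1,v_2$ in a common ``group''. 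The inequality $|E^{\neq}|\geq r(\ell-1)$ then guarantees that the factor $\bar q/\lambda^2$ appears to a power at least matching $\gamma+a$ per extra community, producing the exponents of $(\bar q/\lambda^2\vee\bar q/\bar p)^{\gamma+a}$ and $(\bar p/\lambda^2\vee 1)^{\gamma+a}$ in the stated bound.

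Finally, I enumerate the merge types: for each $s$ the number of pairs is bounded by $(\kappa\gamma)^{O(\kappa\gamma)}n^{2\kappa\gamma-s}$, so that the total contribution at level $s$ scales like $\bbE_{ij}[R_{ij}]^2\cdot(\kappa\gamma)^{O(\kappa\gamma)}(K/n)^s$ weighted by the appropriate power of $\bar q/\lambda^2$ or $\bar p/\lambda^2$. Splitting the resulting geometric-type sum into its leading terms at $s=1$ and $s=2$ (producing the two linear summands of~\eqref{eq:var:12:blow-up} with $K/n$ and $K^2/n$ prefactors) and its extremal term at $s=\kappa\gamma$ (bounded crudely by the diagonal variance and producing the two $(\cdot)^{\kappa\gamma}$ summands) yields the target bound. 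I expect the main obstacle to lie in the merged-graph step: identifying vertices across two copies of $G$ can create multi-edges, short cycles, or broken blow-up structure that must be handled without destroying the inequality~\eqref{eq:Eneq}, and the bookkeeping that reduces each merge-type contribution to a clean per-$s$ estimate is where most of the work will concentrate.
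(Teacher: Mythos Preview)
Your treatment of the means is correct and matches the paper. The variance outline is also broadly right up to the point where you handle pairs with $u\ge 1$ shared non-distinguished vertices, but there is a genuine gap in the step where you ``invoke Proposition~\ref{prop:key:blow-up} on the merged graph $\pi(G)\cup\pi'(G)$''. That proposition is proved for the specific blow-up graph $G_{\kappa,\gamma,a}$; the merged graph is \emph{not} of this form (its vertex count, edge count, and local structure all change with the overlap pattern), so the inequality $|E^{\neq}|\ge r(\ell-1)$ is simply not available there. You flag this as ``where most of the work will concentrate'', but in fact no amount of bookkeeping on the merged graph will rescue the argument: the structural lemma does not hold for general merges.

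The paper sidesteps the merged graph entirely. Writing $E_\Delta=E_\Delta^{(1)}\cup E_\Delta^{(2)}$ with $E_\Delta^{(2)}$ the symmetric-difference edges coming from $\pi^{(2)}$, one first conditions on $z_{\pi^{(1)}(V)}$: since $G$ is connected, the indicators on $E_\Delta^{(2)}$ force the communities of the $\kappa\gamma-u$ vertices unique to $\pi^{(2)}$, contributing at most $K^{-(\kappa\gamma-u)}$. What remains is an expectation over the \emph{single} labelled copy $\pi^{(1)}(G)$, where Proposition~\ref{prop:key:blow-up} applies verbatim and yields $(\bar q/\bar p)^{|E^{\neq}|}\le(\bar q/\bar p)^{(\ell(z)-1)(\gamma+a)}$. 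A second, separate application of Proposition~\ref{prop:key:blow-up} (to the partition that isolates each node of $\pi^{(1)}(V)\setminus\pi^{(2)}(V)$) gives the companion bound $|E_\cap|\le(\gamma+a)u$, which controls the $(\bar p/\lambda^2)^{|E_\cap|}$ factor. Summing over $\ell(z)-1=x\in\{0,\dots,u\}$ produces the two regimes $K(\bar p/\lambda^2\vee 1)^{\gamma+a}$ and $K^2(\bar q/\lambda^2\vee\bar q/\bar p)^{\gamma+a}$, and the final sum over $u$ (with the exact count $\binom{\kappa\gamma}{u}^2 u!\,(n-2)!/(n-2\kappa\gamma+u-2)!$ rather than $(\kappa\gamma)^{O(\kappa\gamma)}$) gives the stated constants. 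Your proposal should be revised to use this peel-off-then-single-copy argument; the merged-graph route is a dead end.
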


\subsection{Notation and preliminaries}

To control the expectation and the variance of $R_{ij}$, we need to introduce some graph notation as in Appendix~A of~\cite{carpentier2025phase}.
For a motif $G=(V,E)$ and a labeling $\pi\in\Pi_{ij}$, we define the labeled graph $\pi(G)$ as the graph with node set $\ac{\pi(v):v\in V}$ and edge set $\ac{(\pi(v),\pi(v')):(v,v')\in E}$. Given two labelings $\pi^{(1)}$ and $\pi^{(2)}\in \Pi_{ij}$, the labeled merged graph $G_{\cup}=(V_{\cup},E_{\cup})$ is defined as the union of $\pi^{(1)}(G^{(1)})$ and $\pi^{(2)}(G^{(2)})$, with the convention that two same edges are merged into a single edge. Similarly,  we define the  intersection graph  $G_{\cap}=(V_{\cap},E_{\cap})$ and the  symmetric difference graph $G_{\Delta}=(V_{\Delta},E_{\Delta})$ so that $E_{\Delta}=E_{\cup}\setminus E_{\cap}$. Here, $V_{\cap}$ (resp. $V_{\Delta}$) is the set of nodes induced by the edges $E_{\cap}$ (resp. $E_{\Delta}$) so that $G_{\cap}$ (resp. $G_{\Delta}$) does not contain any isolated node.

The two following expressions are the starting point of our analysis. They easily derive from the definition~\eqref{eq:definition:Y} of $Y$:
\begin{align}
 \bbE\cro{P_{G, \pi}}&=    \bbE\cro{\prod_{(v,v')\in E }Y_{\pi(v)\pi(v')}}   \label{eq:proof:ingredient:basis:mean}
    =   \bbE\cro{\prod_{(v,v')\in E} (\lambda \mathbf{1}_{z_{\pi(v)}=z_{\pi(v')}})},\\
   \bbE\cro{P_{G, \pi^{(1)}}P_{G, \pi^{(2)}}} 
   &   = \bbE\cro{\prod_{(i,j)\in E_{\Delta}}Y_{ij}\prod_{(i,j)\in E_{\cap}}Y_{ij}^2}   \label{eq:proof:ingredient:basis}
    =   \bbE\cro{\prod_{(i,j)\in E_{\Delta}} (\lambda \mathbf{1}_{z_{i}=z_{j}}) \prod_{(i,j)\in E_{\cap}}\bar q \pa{\bar p\over \bar q}^{ \mathbf{1}_{z_{i}=z_{j}}}}.
\end{align}

Without loss of generality,  we assume in the following that $(i,j)=(1,2)$.

\subsection{Results under the distribution $\bbP_{12}$}

In what follows, we consider the graph $G := G_{\kappa,\gamma,a}=(V,E)$.  We will study respectively the expectation and variance of the associated polynomial, both under $\bbP_{12}$ and under $\bbP_{\not 12}$.

\paragraph{Mean under $\bbP_{12}$.}
Since the graph $G$ is connected, we have from Equation~\eqref{eq:proof:ingredient:basis:mean}
\begin{equation}\label{eq:mean:12:pi:blow-up}
\bbE_{12}\cro{P_{G,\pi}(Y)}= \lambda^{|E|} \,\bbE_{ 12}\cro{\prod_{(i,j)\in \pi(E)}\mathbf 1_{z_{i}=z_{j}}}= \frac{1}{K^{|V|-2}} \lambda^{|E|} = \frac{1}{K^{\kappa \gamma}} \lambda^{(\gamma+a)\kappa\gamma},
\end{equation}
since $z_{1}= z_{2}$ a.s. under  $\bbP_{12}$. The identity~\eqref{eq:mean:12:blow-up} follows.

\paragraph{Variance under $\bbP_{12}$.}
Let us turn to proving~\eqref{eq:var:12:blow-up}, which is the main part of the proof. Fix $\pi^{(1)},\pi^{(2)}\in\Pi_{12}$. 
We start by controlling $\bbE_{12}\cro{P_{G,\pi^{(1)}}P_{G,\pi^{(2)}}}$. Let $u\in\{0,\dots,|V_{\mathrm{cyc}}|\}$ be the integer so that
\[
|\mathrm{range}(\pi^{(1)})\cap\mathrm{range}(\pi^{(2)})| = 2+u\enspace.
\]
Recall that  $V_{\cap},V_{\Delta},E_{\cap},E_{\Delta}$ stand for the common and symmetric-difference vertex- and edge-sets. 
Following \eqref{eq:proof:ingredient:basis}, we have 
\[
\bbE_{12}\cro{P_{G,\pi^{(1)}}P_{G,\pi^{(2)}}}
=\lambda^{|E_{\Delta}|} \bar q^{|E_{\cap}|} \,
\bbE_{12}\cro{\prod_{(i,j)\in E_{\Delta}}\mathbf 1_{z_{i}=z_{j}}
\prod_{\substack{(i,j)\in E_{\cap}\\ z_{i}=z_{j}}}{\bar p\over \bar q}}.
\]

\noindent\underline{Case $u=0$.} Then, we have $E_{\cap}=\emptyset$, $|V_{\Delta}|=|V_{\cup}|=2\kappa \gamma+2$ and $|E_{\Delta}|=2\kappa \gamma(\gamma+a)$. Therefore
\begin{align}\nonumber
\bbE_{12}\cro{P_{G,\pi^{(1)}}P_{G,\pi^{(2)}}}
&= \lambda^{2\kappa \gamma(\gamma+a)} \,\bbP_{12}\cro{z_i=z_1\ \text{for }i\in V_{\Delta}} \\
& =  \lambda^{2\kappa \gamma(\gamma+a)} K^{-2\kappa\gamma}  \nonumber \\ 
& = \bbE_{12}\cro{P_{G,\pi^{(1)}}}\bbE_{12}\cro{P_{G,\pi^{(2)}}}\enspace , \label{eq:upper_cross_u:0:blow-up}
\end{align}
by~\eqref{eq:mean:12:pi:blow-up}.

\noindent\underline{Case $1\le u\le \kappa \gamma$.} In this case $|V_{\cup}|=2\kappa \gamma+2-u$ and there are $\kappa \gamma -u$ vertices that appear only in $\pi^{(2)}(V)\setminus\pi^{(1)}(V)$. All these vertices belong to $G_{\Delta}$ and each such vertex is connected (within $G_\Delta$) to at least one vertex in the intersection $\pi^{(1)}(V)\cap\pi^{(2)}(V)$ because  $G$ is connected. Let $E_{\Delta}^{(2)}\subset E_{\Delta}$ be the set of edges of $G_{\Delta}$ that arise from $\pi^{(2)}[G]$. In particular, $E_{\Delta}^{(2)}$ contains all the edges that have at least one endpoint in $\pi^{(2)}(V)\setminus\pi^{(1)}(V)$. Conditionally on the communities  of the vertices of $\pi^{(1)}(V)$, the product of indicators on $E_{\Delta}^{(2)}$ enforces the community of the  $\kappa \gamma-u$ vertices in $\pi^{(2)}(V)\setminus\pi^{(1)}(V)$. Hence,  we have 
\[
\bbE_{12}\left[\prod_{(i,j)\in E^{(2)}_{\Delta}} \mathbf 1_{z_{i}=z_{j}} \,\Bigg|\, z_{\pi^{(1)}(V)}\right] \le \frac{1}{K^{\kappa \gamma-u}}\qquad\text{a.s.}
\]
Therefore (conditioning and removing $E^{(2)}_\Delta$), we get 
\[
\bbE_{12}\cro{P_{G,\pi^{(1)}}P_{G,\pi^{(2)}}}
\le \lambda^{|E_{\Delta}|} \bar q^{|E_{\cap}|}\frac{1}{K^{\kappa \gamma-u}}
 \bbE_{12}\cro{\prod_{(i,j)\in E_{\Delta}\setminus  E^{(2)}_{\Delta} }\mathbf 1_{z_{i}=z_{j}}\prod_{\substack{(i,j)\in E_{\cap}\\ z_{i}=z_{j}}}{\bar p\over \bar q}}.
\]
Now, define $E^{(1)}_{\Delta}:= E_{\Delta}\setminus E^{(2)}_{\Delta}$. Observe that  $E^{(1)}_{\Delta}$ together with $E_{\cap}$ form a partition of the edges of the labeled copy of $G$ coming from $\pi^{(1)}$. Given the community assignments $(z_i)_{i\in \pi^{(1)}(V)}$, denote $\ell(z)$ the number of distinct communities in $\pi^{(1)}(V)$, and $|E^{\neq}|$ the number of edges in the graph between distinct communities. We have
\begin{align}
\bbE_{12}\cro{P_{G,\pi^{(1)}}P_{G,\pi^{(2)}}} \nonumber 
&\le \lambda^{|E_{\Delta}|}\bar p^{|E_{\cap}|}\frac{1}{K^{\kappa \gamma-u}}
\;\bbE_{12}\cro{\left({\bar q \over \bar p}\right)^{|E^{\neq}|} \prod_{(i,j)\in E^{(1)}_{\Delta}}\mathbf 1_{z_{i}=z_{j}}}\\ 
&\le \lambda^{|E_{\Delta}|}\bar p^{|E_{\cap}|}\frac{1}{K^{\kappa \gamma-u}}
\;\bbE_{12}\cro{\left({\bar q \over \bar p}\right)^{(\ell(z)-1)(\gamma + a)} \prod_{(i,j)\in E^{(1)}_{\Delta}}\mathbf 1_{z_{i}=z_{j}}} ,\label{eq:up_E_12:blow-up}
\end{align}
where, importantly, we apply Proposition~\ref{prop:key:blow-up} in the second line. Consider the  graph $\overline{\mathcal{G}}^{(1)}_{\Delta}=(\pi^{(1)}(V),E^{(1)}_{\Delta})$.  In Equation~\eqref{eq:up_E_12:blow-up} above, the condition $\prod_{(i,j)\in E^{(1)}_{\Delta}}\mathbf 1_{z_{i}=z_{j}}=1$ enforces that the nodes of each of the connected components of $\overline{\mathcal{G}}^{(1)}_{\Delta}$ are in the same community. Let $\mathrm{cc}$ denote the number of connected components of $\overline{\mathcal{G}}^{(1)}_{\Delta}$ when we remove those containing $\pi^{(1)}(v_1)=1$ or  $\pi^{(1)}(v_2)=2$. Then, conditionally to $\prod_{(i,j)\in E^{(1)}_{\Delta}}\mathbf 1_{z_{i}=z_{j}}=1$, we have $(\ell(z)-1)\in [0,\mathrm{cc}]$ and for any $x\in [0,\mathrm{cc}]$, we have 
\begin{equation}\label{eq:l(z)}
\mathbb{P}_{12}\cro{\ell(z)-1= x\,\Bigg|\, \prod_{(i,j)\in E^{(1)}_{\Delta}}\mathbf 1_{z_{i}=z_{j}}=1}\leq \mathrm{cc}^{\mathrm{cc}} K^{-(\mathrm{cc}-x)} \ . 
\end{equation}
Since the condition $\prod_{(i,j)\in E^{(1)}_{\Delta}}\mathbf 1_{z_{i}=z_{j}}=1$ enforces that the nodes of each of the connected components of $\overline{\mathcal{G}}^{(1)}_{\Delta}$ are in the same community, we get
$$\mathbb{P}_{12}\cro{\prod_{(i,j)\in E^{(1)}_{\Delta}}\mathbf 1_{z_{i}=z_{j}}=1}\leq {1\over K^{\kappa\gamma+1-cc-1}}={1\over K^{\kappa\gamma-cc}}$$
and, combining with \eqref{eq:l(z)}
\begin{align*}
    \bbE_{12}\cro{\mathbf 1_{l(z)-1 = x}\left({\bar q \over \bar p}\right)^{(\ell(z)-1)(\gamma + a)} \prod_{(i,j)\in E^{(1)}_{\Delta}}\mathbf 1_{z_{i}=z_{j}}} &\leq  \frac{1}{K^{\kappa \gamma -\mathrm{cc}}}  \cdot \mathrm{cc}^{\mathrm{cc}} K^{-(\mathrm{cc}-x)} \left(\frac{\bar q }{\bar p}\right)^{x (\gamma +a)}\\ 
    &\quad\quad = \frac{1}{K^{\kappa \gamma }}  \cdot \mathrm{cc}^{\mathrm{cc}} K^{x} \left(\frac{\bar q }{\bar p}\right)^{x (\gamma +a)}\enspace .
\end{align*}    
Since $G$ is connected,  each connected component of $\overline{\mathcal{G}}^{(1)}_{\Delta}$ must contain at least one node in $\pi^{(1)}(V)\cap \pi^{(2)}(V)$. As a consequence, we have $\mathrm{cc}\leq u$. Combining Equation~\eqref{eq:up_E_12:blow-up} with the last equation, and using  $|E_{\Delta}|+ 2|E_{\cap}|= 2|E|$, we get 
\begin{align*}
\bbE_{12}\cro{P_{G,\pi^{(1)}}P_{G,\pi^{(2)}}} 
  & \leq \lambda^{|E_{\Delta}|}\bar p^{|E_{\cap}|}\frac{1}{K^{2\kappa \gamma-u}} 
  u^u \sum_{x = 0 }^{u} \left(K \left(\frac{\bar q }{\bar p}\right)^{\gamma +a}\right)^{x }\ \\
  & \leq \lambda^{2|E|}\left(\frac{\bar p}{\lambda^2}\right)^{|E_{\cap}|}\frac{1}{K^{2\kappa \gamma-u}} 
  u^u \sum_{x = 0 }^{u} \left(K \left(\frac{\bar q }{\bar p}\right)^{\gamma +a}\right)^{x }\ .
\end{align*}
\begin{lemma}\label{lem:cap}
We have 
\[
|E_{\cap}|\leq (\gamma + a ) u.
\]
\end{lemma}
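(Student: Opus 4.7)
The plan is to derive Lemma~\ref{lem:cap} as an almost immediate consequence of Proposition~\ref{prop:key:blow-up}, by applying the latter to a partition of $V$ built directly from $\pi^{(1)}$ and $\pi^{(2)}$.

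First I would define $W:=(\pi^{(1)})^{-1}\pa{\pi^{(1)}(V)\cap \pi^{(2)}(V)}\subseteq V$. Since $v_1$ and $v_2$ are fixed points of both $\pi^{(1)}$ and $\pi^{(2)}$, they both belong to $W$, and injectivity of $\pi^{(1)}$ gives $|W|=u+2$. The main reduction step is the inequality $|E_{\cap}|\leq |E(G[W])|$, where $G[W]$ is the subgraph of $G$ induced on $W$: any edge of $E_{\cap}$ can be written as $\ac{\pi^{(1)}(v),\pi^{(1)}(v')}$ for some $(v,v')\in E$, and both its endpoints lie in $\pi^{(1)}(V)\cap \pi^{(2)}(V)$, which forces $v,v'\in W$ and hence $(v,v')\in E(G[W])$. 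The map $\ac{\pi^{(1)}(v),\pi^{(1)}(v')}\mapsto \ac{v,v'}$ is injective, again by injectivity of $\pi^{(1)}$, so $|E_{\cap}|\le |E(G[W])|$.

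It then suffices to bound $|E(G[W])|$. The degenerate case $u=\kappa\gamma$ is immediate, since $W=V$ gives $|E(G[W])|=|E|=(\gamma+a)\kappa\gamma=(\gamma+a)u$. Otherwise I would apply Proposition~\ref{prop:key:blow-up} to the partition of $V$ consisting of the block $W$ (which contains $v_1$ and $v_2$) together with the $\kappa\gamma -u$ singletons $\ac{w}$ for $w\in V\setminus W$. With $I=\kappa\gamma-u\geq 1$, the proposition yields $|E^{\neq}|\geq (\gamma+a)(\kappa\gamma-u)$. But every edge of $G$ outside $E(G[W])$ joins distinct blocks of this partition, so $|E^{\neq}|=|E|-|E(G[W])|$; substituting $|E|=(\gamma+a)\kappa\gamma$ gives $|E(G[W])|\leq (\gamma+a)u$, which combined with the previous paragraph concludes the proof.

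I do not anticipate any real obstacle here: the substance has already been paid for in proving Proposition~\ref{prop:key:blow-up}. The point is simply to notice that, via the ``$W$ versus singletons'' partition, inequality~\eqref{eq:Eneq} is precisely the edge-density statement $|E(G[W])|\leq r(|W|-2)$ for induced subgraphs containing $v_1$ and $v_2$, and that this in turn upper-bounds $|E_\cap|$. The only delicate point is to separate off the boundary case $u=\kappa\gamma$, since Proposition~\ref{prop:key:blow-up} is stated under $I\geq 1$.
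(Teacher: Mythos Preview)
Your proof is correct and follows essentially the same route as the paper: apply Proposition~\ref{prop:key:blow-up} to the partition of $V$ into $W=(\pi^{(1)})^{-1}(\pi^{(1)}(V)\cap\pi^{(2)}(V))$ together with singletons, and read off $|E(G[W])|\le (\gamma+a)u$. You are in fact slightly more careful than the paper, which writes $|E^{\neq}|=|E|-|E_\cap|$ where only the inequality $|E^{\neq}|\le |E|-|E_\cap|$ holds in general (your step $|E_\cap|\le |E(G[W])|$ is exactly this correction), and you also treat the boundary case $u=\kappa\gamma$ separately where $I=0$ falls outside the hypothesis of Proposition~\ref{prop:key:blow-up}.
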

\begin{proof}[Proof of Lemma~\ref{lem:cap}]
This result is a consequence of Proposition~\ref{prop:key:blow-up}. Indeed, consider a community assignment such that the $2+u$ nodes in $\pi^{(1)}(V)\cap \pi^{(2)}(V)$ belong to same community, whereas the $\kappa \gamma-u$ other nodes in $\pi^{(1)}(V)\setminus \pi^{(2)}(V)$ belong to distinct communities. Hence, for that assignment, we have $\kappa\gamma - u + 1$ communities and $|E^{\neq}|= |E|-|E_{\cap}|$. Then, Proposition~\ref{prop:key:blow-up} states that 
\[
|E|-|E_{\cap}|\geq (\gamma+a)[\kappa\gamma -u]\  .
\]
Since $|E|=(\gamma+a )\kappa \gamma$, Lemma~\ref{lem:cap} holds. 
\end{proof}

We deduce from the previous lemma that 
\begin{align}\nonumber
\bbE_{12}\cro{P_{G,\pi^{(1)}}P_{G,\pi^{(2)}}}&\leq \bbE^2_{12}\cro{P_{G,\pi}(Y)}  \left[K \left(\frac{\bar p}{\lambda^2}\vee 1\right)^{\gamma + a}\right]^u    u^u \sum_{x = 0 }^{u} \left(K \left(\frac{\bar q }{\bar p}\right)^{\gamma +a}\right)^{x }\\ 
&\leq \bbE^2_{12}\cro{P_{G,\pi}(Y)}  u^{u+1} \left[\left[K^2  \left(\frac{\bar{q}}{\lambda^2}\vee \frac{\bar{q}}{\bar p } \right)^{\gamma + a}\right]^u  + \left[K \left(\frac{\bar{p}}{\lambda^2}\vee 1 \right)^{\gamma + a}\right]^u   \right] \enspace . \label{eq:upper:covariance:u:blow-up}
\end{align}

\noindent\underline{Combining the terms.}
Combining \eqref{eq:upper_cross_u:0:blow-up} and  \eqref{eq:upper:covariance:u:blow-up}, we get 
\begin{align*}
\text{var}_{12}(R_{12}) &= \sum_{u=0}^{\kappa \gamma} \sum_{\substack{\pi^{(1)},\pi^{(2)}\in \Pi_{12}\\ |\text{range}(\pi^{(1)})\cap \text{range}(\pi^{(2)})|=2+u}}
\pa{\bbE_{12}\cro{P_{G,\pi^{(1)}}P_{G,\pi^{(2)}}}-\bbE_{12}\cro{P_{G,\pi^{(1)}}}\bbE_{12}\cro{P_{G,\pi^{(2)}}}}\\
& \leq   \sum_{u=1}^{\kappa \gamma}   \binom{\kappa \gamma}{u}^{2} {(n-2)!u!\over (n-2\kappa\gamma+u - 2)!}  \bbE_{12}\cro{P_{G,\pi^{(1)}}P_{G,\pi^{(2)}}}\\
&\leq \left(\frac{(n-2)!}{(n-\kappa \gamma-2)!}\right)^2\sum_{u=1}^{\kappa \gamma} (\kappa \gamma)^{2u}  \frac{1}{(n-\kappa \gamma-2)^u}\bbE_{12}\cro{P_{G,\pi^{(1)}}P_{G,\pi^{(2)}}} \\ 
&\leq \left(\frac{(n-2)!}{(n-\kappa \gamma-2)!}\right)^2 \bbE^2_{12}\cro{P_{G,\pi}(Y)}\sum_{u=1}^{\kappa\gamma } \frac{u^{u+1}(\kappa \gamma)^{2u} } {(n-\kappa \gamma-2)^u} \Bigg[\left[K^2  \left(\frac{\bar{q}}{\lambda^2}\vee \frac{\bar{q}}{\bar p } \right)^{\gamma + a}\right]^u  + \left[K \left(\frac{\bar{p}}{\lambda^2}\vee 1 \right)^{\gamma + a}\right]^u   \Bigg]\\
&\leq \bbE^2_{12}[R_{12}](\kappa\gamma)^2 \Bigg[\frac{2(\kappa \gamma)^{3} K^2 } {n}  \left(\frac{\bar{q}}{\lambda^2}\vee \frac{\bar{q}}{\bar p } \right)^{\gamma + a}  + \frac{2(\kappa \gamma)^{3} K } {n} \left(\frac{\bar{p}}{\lambda^2}\vee 1 \right)^{\gamma + a} \\& \quad\quad\quad \quad \quad \quad \quad \quad   + \left[\frac{2(\kappa \gamma)^{3} K^2 } {n}   \left(\frac{\bar{q}}{\lambda^2}\vee \frac{\bar{q}}{\bar p } \right)^{\gamma + a}\right]^{\kappa \gamma}  + \left[\frac{2(\kappa \gamma)^{3} K } {n} \left(\frac{\bar{p}}{\lambda^2}\vee 1 \right)^{\gamma + a}\right]^{\kappa \gamma}   \Bigg] \ . 
\end{align*}
This concludes the bound on $\mathrm{var}_{12}(R_{12})$ in Equation~\eqref{eq:var:12:blow-up}.

\subsection{Results under the distribution $\bbP_{\not 12}$}

\paragraph{Mean under $\bbP_{\not 12}$.}
We have from Equation~\eqref{eq:proof:ingredient:basis:mean}
\begin{equation*}
\bbE_{\not 12}\cro{P_{G,\pi}(Y)}= \lambda^{|E|} \,\bbE_{\not 12}\cro{\prod_{(i,j)\in \pi(E)}\mathbf 1_{z_{i}=z_{j}}}=0,
\end{equation*}
since $z_{1}\neq z_{2}$ a.s. under  $\bbP_{\not 12}$. Equation~\eqref{eq:mean:not12:blow-up} follows.

\paragraph{Variance under $\bbP_{\not 12}$.}
We follow the same approach as for  $\bbP_{12}$. 
Let $\pi^{(1)},\pi^{(2)}\in \Pi_{12}$ such that $|\text{range}(\pi^{(1)})\cap \text{range}(\pi^{(2)})|=2+u$, with $u\in \ac{0,\ldots,\kappa \gamma}$. Following~\eqref{eq:proof:ingredient:basis}, we again have 
$$\bbE_{\not 12}\cro{P_{G,\pi^{(1)}}P_{G,\pi^{(2)}}}=\lambda^{|E_{\Delta}|} \bar q^{|E_{\cap}|} \bbE_{\not 12}\cro{\prod_{(i,j)\in E_{\Delta}}\mathbf 1_{z_{i}=z_{j}}\prod_{\substack{(i,j)\in E_{\cap}\\ z_{i}=z_{j}}}{\bar p\over \bar q}}.$$
\smallskip

If $u=0$, we have $E_{\cap}=\emptyset$ and since $z_1\neq z_2$, we deduce that 
$\bbE_{\not 12}\cro{P_{G,\pi^{(1)}}P_{G,\pi^{(2)}}}=0$.  Then, for $u\in [1, \kappa \gamma]$, we argue as for $\bbE_{12}$. In particular, the counterpart of~\eqref{eq:up_E_12:blow-up} still holds up to the difference that under $\bbE_{\not 12}$, the number of communities that are distinct of $z_1$ or of $z_2$ is $\ell(z)-2$. Hence, we have 
\begin{align*}
\bbE_{\not 12}\cro{P_{G,\pi^{(1)}}P_{G,\pi^{(2)}}}
&\le \lambda^{|E_{\Delta}|}\bar p^{|E_{\cap}|}\frac{1}{K^{\kappa \gamma-u}}
\;\bbE_{\not 12}\cro{\left({\bar q \over \bar p}\right)^{(\ell(z)-2)_+(\gamma + a)} \prod_{(i,j)\in E^{(1)}_{\Delta}}\mathbf 1_{z_{i}=z_{j}}} \enspace . 
\end{align*}
Write again $\mathrm{cc}$ as the number of connected components of the graph $\overline{\mathcal{G}}^{(1)}_{\Delta}=(\pi^{(1)}(V),E^{(1)}_{\Delta})$ when we remove those containing $\pi^{(1)}(v_1)=1$ or  $\pi^{(1)}(v_2)=2$. Then, conditionally to $\prod_{(i,j)\in E^{(1)}_{\Delta}}\mathbf 1_{z_{i}=z_{j}}=1$, we have $(\ell(z)-2)\in [0,\mathrm{cc}]$ and for any $x\in [0,\mathrm{cc}]$ 
\[
\mathbb{P}_{\not 12}\cro{\ell(z)-2= x\,\Bigg|\, \prod_{(i,j)\in E^{(1)}_{\Delta}}\mathbf 1_{z_{i}=z_{j}}=1}\leq \mathrm{cc}^{\mathrm{cc}} K^{-(\mathrm{cc}-x)} \ . 
\]
Hence, we arrive as previously at 
\begin{align*}
\bbE_{\not 12}\cro{P_{G,\pi^{(1)}}P_{G,\pi^{(2)}}}&
 \leq \lambda^{2|E|}\left(\frac{\bar p}{\lambda^2}\right)^{|E_{\cap}|}\frac{1}{K^{2\kappa \gamma-u}} 
  u^u \sum_{x = 0 }^{u} \left(K \left(\frac{\bar q }{\bar p}\right)^{\gamma +a}\right)^{x }\ , 
\end{align*}
and the counterpart of~\eqref{eq:upper:covariance:u:blow-up} holds: 
\begin{align*}\nonumber
\bbE_{\not 12}\cro{P_{G,\pi^{(1)}}P_{G,\pi^{(2)}}}
&\leq \bbE^2_{12}\cro{P_{G,\pi}(Y)}  u^{u+1} \left[\left[K^2  \left(\frac{\bar{q}}{\lambda^2}\vee \frac{\bar{q}}{\bar p } \right)^{\gamma + a}\right]^u  + \left[K \left(\frac{\bar{p}}{\lambda^2}\vee 1 \right)^{\gamma + a}\right]^u   \right] \enspace . 
\end{align*}
Then, summing over all $\pi^{(1)}$ and $\pi^{(2)}$, we conclude as previously 
the bound on $\mathrm{var}_{\not 12}(R_{12})$
 in Equation~\eqref{eq:var:12:blow-up}.

\printbibliography

\end{document}